\newcommand{\pref}{\prettyref}
\newcommand{\task}[1]{{\color{red}Task~#1}}
\DeclarePairedDelimiter\abs{\lvert}{\rvert}
\newcommand{\ProvablePatching}{{Provable Repair}}
\DeclarePairedDelimiterX{\norm}[1]{\lVert}{\rVert}{#1}
\newcommand{\RQ}[1]{\textbf{RQ#1}}
\newcommand{\subsubsubsection}[1]{\noindent\textbf{\emph{#1}}\enspace}
\newcommand\PointR{\mathtt{PointRepair}}
\newcommand\PolyR{\mathtt{PolytopeRepair}}
\newcommand\sats[2]{#1\Vdash#2}
\newcommand\satsp[2]{#1\Vdash#2}
\newcommand\target{arxiv}
\newcommand\onlyfor[3]{\ifthenelse{\equal{#1}{\target}}{#2}{#3}}
\renewcommand\footnotetextcopyrightpermission[1]{}
\begin{document}

\title[Provable Repair of DNNs]{\ProvablePatching{} of Deep Neural Networks}

\author{Matthew Sotoudeh}
\orcid{0000-0003-2060-1009}             %
\affiliation{
  \institution{University of California, Davis}            %
  \city{Davis}
  \state{CA}
  \country{USA}                    %
}
\email{masotoudeh@ucdavis.edu}          %

\author{Aditya V.\ Thakur}
\orcid{0000-0003-3166-1517}             %
\affiliation{
  \institution{University of California, Davis}
  \city{Davis}
  \state{CA}
  \country{USA}                   %
}
\email{avthakur@ucdavis.edu}         %

\begin{abstract}
    Deep Neural Networks (DNNs) have grown in popularity over the past decade
    and are now being used in safety-critical domains such as aircraft collision
    avoidance. This has motivated a large number of techniques for finding
    unsafe behavior in DNNs. In contrast, this paper tackles the problem of
    correcting a DNN once unsafe behavior is found. We introduce the
    \emph{provable repair problem}, which is the problem of repairing a network
    $N$ to construct a new network $N'$ that satisfies a given specification. If
    the safety specification is over a finite set of points, our Provable Point
    Repair algorithm can find a provably minimal repair satisfying the
    specification, regardless of the activation functions used. For safety
    specifications addressing convex polytopes containing infinitely many
    points, our Provable Polytope Repair algorithm can find a provably minimal
    repair satisfying the specification for DNNs using piecewise-linear
    activation functions. The key insight behind both of these algorithms is the
    introduction of a \emph{Decoupled} DNN architecture, which allows us to
    reduce provable repair to a linear programming problem.
    Our experimental results demonstrate the efficiency and effectiveness of our
    Provable Repair algorithms on a variety of challenging tasks.
    \onlyfor{arxiv}{\vspace{7mm}}{}
\end{abstract}

\begin{CCSXML}
<ccs2012>
   <concept>
       <concept_id>10010147.10010257.10010293.10010294</concept_id>
       <concept_desc>Computing methodologies~Neural networks</concept_desc>
       <concept_significance>500</concept_significance>
       </concept>
   <concept>
       <concept_id>10003752.10003809.10003716.10011138.10010041</concept_id>
       <concept_desc>Theory of computation~Linear programming</concept_desc>
       <concept_significance>500</concept_significance>
       </concept>
   <concept>
       <concept_id>10011007.10011074.10011111</concept_id>
       <concept_desc>Software and its engineering~Software post-development issues</concept_desc>
       <concept_significance>500</concept_significance>
       </concept>
 </ccs2012>
\end{CCSXML}

\ccsdesc[500]{Computing methodologies~Neural networks}
\ccsdesc[500]{Theory of computation~Linear programming}
\onlyfor{arxiv}{
\ccsdesc[500]{Software and its engineering~Software post-development issues\vspace{3mm}}
}{
\ccsdesc[500]{Software and its engineering~Software post-development issues}
}

\keywords{Deep Neural Networks, Repair, Bug fixing}

\maketitle

\onlyfor{arxiv}{\newpage}{}
\section{Introduction}
\label{sec:Introduction}

Deep neural networks (DNNs)~\cite{Goodfellow:DeepLearning2016} have been
successfully applied to a wide variety of problems, including image
recognition~\cite{Krizhevsky:CACM2017}, natural-language
processing~\cite{bert2019}, medical diagnosis~\cite{KERMANY20181122}, aircraft
collision avoidance~\citep{julian2018deep}, and self-driving
cars~\citep{bojarski2016end}. However, DNNs are far from infallible, and
mistakes made by DNNs have led to loss of life \cite{teslacrash,ubercrash} and
wrongful arrests~\cite{translationarrest,wronglyaccused}. This has motivated
recent advances in understanding~\cite{Szegedy:ICLR2014,Goodfellow:ICLR2015},
verifying~\cite{Bastani:NIPS2016,reluplex:CAV2017,ai2:SP2018,Singh:POPL2019,Anderson:PLDI2019},
and
testing~\cite{pei2017deepxplore,tian2018deeptest,sun2018concolic,odena2019tensorfuzz}
of DNNs. In contrast, this paper addresses the problem of repairing a DNN once a mistake is discovered.

Consider the following motivating scenario: we have a trained
SqueezeNet~\cite{SqueezeNet}, a modern convolutional image-recognition DNN
consisting of 18 layers and 727,626 parameters. It has an accuracy of 93.6\%
on the ImageNet dataset~\cite{imagenet_cvpr09}. \emph{After deployment}, we find
that certain images are misclassified. In particular, we see that SqueezeNet has
an accuracy of only 18\% on the Natural Adversarial Examples~(NAE)
dataset~\cite{hendrycks2019nae}. \pref{fig:squirrel} shows one such image whose
actual class is \texttt{Fox Squirrel}, but the DNN predicts \texttt{Sea Lion}
with 99\% confidence. We would like to repair (or patch) the trained SqueezeNet
to ensure that it correctly classifies such images.

To repair the DNN, one could \emph{retrain} the network using the original
training dataset augmented with the newly-identified buggy inputs. Retraining,
however, is extremely inefficient; e.g., training SqueezeNet takes days or weeks
using state-of-the-art hardware. Worse, the original training dataset is often not
available for retraining; e.g., it could be private medical information, sensitive
intellectual property, or simply lost. These considerations are
more important with privacy-oriented regulations that
require companies to delete private data regularly and upon request.
Retraining can also make arbitrary changes to the DNN and, in many cases,
introduce new bugs into the DNN behavior. These issues make it infeasible,
impossible, and/or ineffective to apply retraining in many real-world
DNN-repair scenarios.

One natural alternative to retraining is \emph{fine tuning,} where we apply
gradient descent to the trained DNN but only using a smaller dataset
collected once buggy inputs are found.
While this reduces the computational cost of repair and does not require access
to the original training dataset, fine-tuning significantly increases the risk
of \emph{drawdown}, where the network forgets things it learned on the original,
larger training dataset in order to achieve high accuracy on the buggy
inputs~\cite{kemker2018measuring}. In particular, fine tuning provides no
guarantees that it makes minimal changes to the original DNN.

The effectiveness of both retraining and fine tuning to repair the DNN  is
extremely sensitive to the specific hyperparameters chosen; viz., training
algorithm, learning rate, momentum rate, etc.
Importantly, because gradient descent cannot \emph{disprove} the existence of a
better solution were some other hyperparameters picked, one might have
to try a large number of potential hyperparameter combinations in the hope
of finding one that will lead to a successful repair. This is a time-consuming
process that significantly reduces the effectiveness of such techniques in
practice.

Based on the above observations, we can deduce the following requirements for
our DNN repair algorithm, where $N$ is the buggy DNN, $X$ is the set of buggy
inputs, and $N'$ is the repaired DNN:
(P1)~\emph{efficacy}: $N'$ should correctly classify all inputs in $X$;
(P2)~\emph{generalization}: $N'$ should correctly classify inputs similar to
those in $X$;
(P3)~\emph{locality}: $N'$ should behave the same as $N$ on inputs that are
dissimilar to those in $X$;
(P4)~\emph{efficiency}: the repair algorithm should be efficient.

This paper presents a novel technique for \emph{Provable
Pointwise Repair} of DNNs that is effective, generalizing, localized, and
efficient~(\pref{sec:PointPatching}).
Given a DNN $N$ and a finite set of points $X$ along with their desired output,
our Provable Pointwise Repair algorithm synthesizes a repaired DNN $N'$ that is
guaranteed to give the correct output for all points in $X$. To ensure
locality, our algorithm can \emph{provably guarantee} that the repair
(difference in parameters) from $N$ to $N'$ is the smallest such single-layer repair. Our
Provable Pointwise Repair algorithm makes no restrictions on the activation
functions used by $N$.

\begin{figure}[t]
    \centering
    \begin{minipage}[t]{0.2\textwidth}
        \centering
        \includegraphics[height=2.5cm]{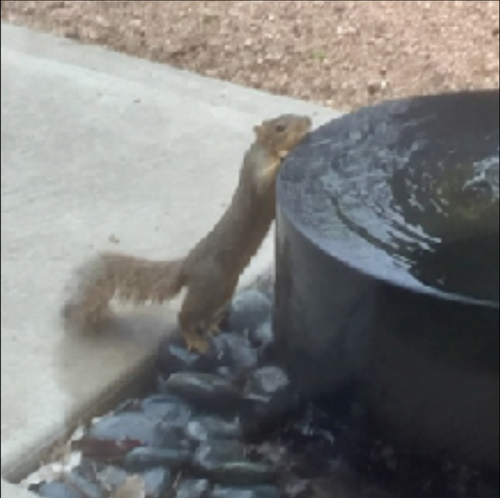}
        \caption{Natural adversarial example}
        \label{fig:squirrel}
    \end{minipage}\hfill
    \begin{minipage}[t]{0.2\textwidth}
        \centering
        \includegraphics[height=2.5cm]{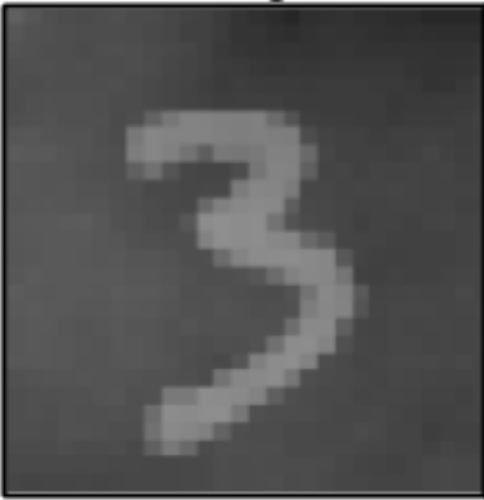}
        \caption{Fog-corrupted digit}
        \label{fig:fog3}
    \end{minipage}\hfill
    \vspace{-2.5ex}
\end{figure}

Provable repair of even a \emph{single layer} of the DNN is a
formally NP-hard problem, and completely infeasible in practice even using
state-of-the-art SMT solvers~\cite{DBLP:conf/lpar/GoldbergerKAK20}. The use of
\emph{non-linear activation functions} implies that changing even a single
weight can have a non-linear effect on the output of the DNN. However, if the
final layer of the DNN is linear (instead of a non-linear activation function),
then repairing just the output layer is actually a linear programming (LP) problem
\cite{DBLP:conf/lpar/GoldbergerKAK20} solvable in polynomial
time~\citep{khachiyanpolylp}.

The key insight to our approach is the introduction of a new DNN architecture,
called \emph{Decoupled DNNs} or DDNNs. DDNNs strictly generalize the notion of
DNNs, meaning every DNN can be trivially converted into an equivalent DDNN.
We will show that repairing \emph{any} single layer in a DDNN
reduces to an LP problem. This allows us to
compute the \emph{smallest} such single-layer repair with respect to either
the $\ell_1$ or $\ell_\infty$ norm, and, thus,
reduce forgetting.

This paper also introduces an algorithm for \emph{Provable Polytope
Repair} of DNNs (\pref{sec:PolytopePatching}), which is like Provable Point
Repair except the set of points $X$ is \emph{infinite} and specified as a union
of convex polytopes (hereafter just ``polytopes'') in the input space of the
network.

Consider a trained DNN for classifying handwritten digits, which has an accuracy
of 96.5\% on the MNIST dataset~\cite{lecun2010mnist}. After deployment, we find
that the accuracy of the network drops to 20\% on images corrupted with fog;
\pref{fig:fog3} shows an example of one such fog-corrupted image  from
MNIST-C~\cite{DBLP:journals/corr/abs-1906-02337}. We would like to repair the
network to correctly classify such fog-corrupted images. However, we might also
want to account for different amounts of fog. Let $I$ and $I_f$ be an
uncorrupted and fog-corrupted image, respectively. Then each image along the line
from $I$ to $I_f$ is corrupted by a different amount of fog. We can use
Provable Polytope Repair so that the DNN correctly classifies \emph{all
infinitely-many} such foggy images along the line from $I$ to $I_f$.

Consider an aircraft collision-avoidance network~\cite{julian2018deep}
that controls the direction an aircraft should turn based on the relative
position of an attacking aircraft. We may want this DNN to satisfy certain
properties, such as never instructing the aircraft to turn towards the attacker
when the attacker is within a certain distance. Our Provable Polytope Repair
algorithm can synthesize a repaired DNN that provably satisfies such safety
properties on an infinite set of input scenarios.

The main insight for solving Provable Polytope Repair is that, for
piecewise-linear DDNNs, repairing polytopes (with infinitely many points) is
equivalent to Provable Point Repair on finitely-many \emph{key points}.
These key points can be computed for DDNNs using prior work
on computing symbolic representations of DNNs~\cite{exactline,syrenn}. This reduction
is intuitively similar to how the simplex algorithm reduces optimizing over a
polytope with infinitely many points to optimizing over the finitely-many
vertex points. As illustrated by the above two scenarios, there are practical
applications in which the polytopes used in the repair specification are
low-dimensional subspaces of the input space of the DNNs.

We evaluate the efficiency and efficacy of our Provable Repair
algorithms compared to fine tuning (\pref{sec:ExperimentalEvaluation}).
The repairs by our algorithms \emph{generalize} to similarly-buggy inputs while
avoiding significant \emph{drawdown}, or forgetting.

The contributions of the paper are:
\begin{itemize}
    \item We introduce Decoupled DNNs, a new DNN architecture that enables
    efficient and effective repair~(\pref{sec:DDNNs}).
    \item An algorithm for Provable Point Repair~(\pref{sec:PointPatching}).
    \item An algorithm for Provable Polytope Repair of piecewise-linear
    DNNs~(\pref{sec:PolytopePatching}).
    \item Experimental evaluation of Provable
        Repair~(\pref{sec:ExperimentalEvaluation}).
\end{itemize}
\pref{sec:Preliminaries} describes preliminaries;
\pref{sec:Overview} presents an overview of our approach;
\pref{sec:RelatedWork} describes related work; \pref{sec:Conclusion} concludes.

\section{Preliminaries}
\label{sec:Preliminaries}

A feed-forward DNN is a special type of loop-free computer program that
computes a vector-valued function. DNNs are often represented as layered DAGs.
An input to the network is given by associating with each node in the input
layer one component of the input vector. Then each node in the second layer
computes a weighted sum of the nodes in the input layer according to the edge
weights. The output of each node in the second layer is the image of this
weighted sum under some \emph{non-linear activation function} associated with
the layer. This process is repeated until output values at the final layer are
computed, which form the components of the output vector.

Although we will use the above DAG definition of a DNN for the intuitive examples
in~\pref{sec:Overview}, for most of our formal theorems we will use an entirely
equivalent definition of DNNs, below, as an alternating concatenation of \emph{linear}
and \emph{non-linear} functions.

\begin{definition}
    \label{def:DNN}
    A \emph{Deep Neural Network} (DNN) with layer sizes $s_0, s_1, \ldots, s_n$
    is a list of tuples $(W^{(1)}, \sigma^{(1)}), \ldots, (W^{(n)},
    \sigma^{(n)})$, where each $W^{(i)}$ is an $s_i \times s_{i-1}$ matrix and
    $\sigma^{(i)} : \mathbb{R}^{s_i}\to \mathbb{R}^{s_i}$ is some
    \emph{activation function}.
\end{definition}
\begin{definition}
    \label{def:DNNFunction}
    Given a DNN $N$ with layers $(W^{(i)}, \sigma^{(i)})$ we say the
    \emph{function associated with the DNN} is a function $N: \mathbb{R}^{s_0}
    \to \mathbb{R}^{s_n}$ given by
    $
        N(\vec{v}) = \vec{v}^{(n)}
    $
    where
    $  \vec{v}^{(0)} \coloneqq \vec{v}$
        and
        $\vec{v}^{(i)} \coloneqq \sigma^{(i)}(W^{(i)}\vec{v}^{(i-1)}).$
\end{definition}

For ease of exposition, we have assumed: (i) that every layer in the DNN is
fully-connected, i.e., parameterized by an entire weight matrix, and (ii) that
the activation functions $\sigma^{(i)}$ have the same domain and range.
However, our algorithms do not rely on these conditions, and in fact, we use
more complicated DNNs (such as Convolutional Neural Networks) in our
evaluations (\pref{sec:ExperimentalEvaluation}).

There are a variety of activation functions used for $\sigma^{(i)}$, including
ReLU, Hyperbolic Tangent, (logistic) Sigmoid, AveragePool,
and MaxPool~\citep{Goodfellow:DeepLearning2016}. In our examples, we will use
the ReLU function, defined below, due to its simplicity and use in real-world
DNN architectures, although our algorithms and theory work for arbitrary
activation functions.

\begin{definition}
    The \emph{ReLU Activation Function} is a vector-valued function
    $\mathbb{R}^n\to \mathbb{R}^n$ defined component-wise by
    \[
        ReLU(\vec{v})_i =
        \begin{cases}
            v_i &\text{if } v_i \geq 0 \\
            0 &\text{otherwise,}
        \end{cases}
    \]
    where $ReLU(\vec{v})_i$ is the $i^{\mathit{th}}$ component of the output vector
    $ReLU(\vec{v})$ and $v_i$ is the $i^{\mathit{th}}$ of the input vector
    $\vec{v}$.
\end{definition}

Of particular note for our polytope repair algorithm
(\pref{sec:PolytopePatching}), some of the most common activation functions
(particularly ReLU) are \emph{piecewise-linear}.
\begin{definition}
    \label{def:PWL}
    A function $f : \mathbb{R}^n \to \mathbb{R}^m$ is \emph{piecewise-linear}
    (PWL) if its input domain can be partitioned into finitely-many
    polytopes $X_1, X_2, \ldots, X_n$ such that, for each $X_i$, there exists
    some \emph{affine} function $f_i$ such that $f(x) = f_i(x)$ for every $x
    \in X_i$.
\end{definition}

This paper uses the terms `linear' and `affine'
interchangeably. It follows from \pref{def:PWL} that
compositions of PWL functions are themselves PWL. Hence, a DNN using only PWL
activation functions is also PWL in its entirety.

For a network using PWL activation functions, we can always associate with each
input to the network an \emph{activation pattern}, as defined below.
\begin{definition}
    Let $N$ be a DNN using only PWL activation functions. Then an
    \emph{activation pattern} $\gamma$ is a mapping from each activation
    function $\sigma^{(j)}$ to a linear region $\gamma(\sigma^{(j)})$ of
    $\sigma^{(j)}$. We say an activation pattern $\gamma$ \emph{holds} for a vector
    $\vec{v}$ if, for every layer $j$, we have $W^{(j)}\vec{v}^{(j-1)} \in
    \gamma(\sigma^{(j)})$.
\end{definition}
Recall that $\vec{v}$ is the input to the first layer of the network while
$\vec{v}^{(j - 1)}$ is the intermediate input to the $j$th layer. For example,
suppose $N$ is a ReLU network where $\gamma$ holds for vector
$v$, then $\gamma(\sigma^{(j)})$ is exactly the set of nodes in layer $j$ with
positive output when evaluating the DNN on input $\vec{v}$.

Let $N$ be a DNN that uses only PWL activation functions.
Then we notate by $LinRegions(N)$ the set of polytopes $X_1, X_2, \ldots, X_n$
that partition the domain of $N$ such that the conditions in~\pref{def:PWL}
hold.
In particular, we will use the partitioning for which we can assign each
$X_i$ a unique activation pattern $\gamma_i$ such that $\gamma_i$ holds for all
$\vec{v} \in X_i$.

When appropriate, for polytope $P$ in the domain of $N$, we will notate by
$LinRegions(N, P)$ a partitioning $X_1, X_2, \ldots, X_n$ \emph{of $P$} that
meets the conditions in~\pref{def:PWL}. Formally, we have $LinRegions(N, P)
\coloneqq LinRegions(N_{\restriction P})$, where $N_{\restriction P}$ is the
restriction of $N$ to domain $P$.

Consider the ReLU DNN $N_1$ shown in~\pref{fig:OverviewDNN}, which has one
input $x$, one output $y$, and three so-called \emph{hidden} nodes $h_1$,
$h_2$, and $h_3$ using ReLU activation function.  We will consider the
input-output behavior of this network for the domain $x \in [-1, 2]$.  The
linear regions of $N_1$ are shown visually in~\pref{fig:OverviewDNNPlot} as
colored intervals on the $x$ axis, which each map into the postimage according
to some affine mapping which is specific to that region.  In particular, we
have three linear regions:
\begin{equation}
    \label{eq:OverviewLinRegionsN1}
    LinRegions(N_1, [-1, 2]) = \{ [-1, 0], [0, 1], [1, 2] \}.
\end{equation}
Each linear region corresponds to a particular \emph{activation pattern} on the
hidden nodes; i.e., which ones are in the zero region or the identity region.
The first linear region, $[-1, 0]$ ({\color{red}red}), corresponds to the
activation pattern where only $h_1$ is activated. The second linear region,
$[0, 1]$ ({\color{blue}blue}), corresponds to the activation pattern where only
$h_2$ is activated. Finally, the third linear region, $[1, 2]$
({\color{green}green}), corresponds to the activation pattern where both $h_2$
and $h_3$ are activated.

In practice, we can quickly compute $LinRegions(N, P)$ for either large $N$
with one-dimensional $P$ or medium-sized $N$ with two-dimensional $P$.
We use the algorithm of~\citet{syrenn}
for computing $LinRegions(N, P)$ when $P$ is one- or two-dimensional.

\begin{definition}
    \label{def:LP}
    A \emph{linear program} (LP) with $n$ constraints on $m$ variables is a
    triple $(A, \vec{b}, \vec{c})$ where $A$ is an $n\times m$ matrix,
    $\vec{b}$ is an $n$-dimensional vector, and $\vec{c}$ is an $m$-dimensional
    vector.

    A \emph{solution to the linear program} is an $m$-dimensional vector
    $\vec{x}$ such that (i) $A\vec{x} \leq \vec{b}$, and (ii) $\vec{c}\cdot
    \vec{x}$ is minimal among all $\vec{x}$ satisfying (i).
\end{definition}

Linear programs can be solved in polynomial time~\citep{khachiyanpolylp}, and
many efficient, industrial-grade LP solvers such as the Gurobi solver~\citep{gurobi}
exist.  Through the addition of auxiliary variables, it is also possible to
encode in an LP the objective of minimizing the $\ell_1$ and/or $\ell_\infty$
norms of $\vec{x}$~\citep{abslp}.

\section{Overview}
\label{sec:Overview}

This paper discusses how to repair DNNs to enforce precise
\emph{specifications}, i.e., constraints on input-output behavior.

\newcommand{\overviewdnn}[1]{
    \tikzstyle{dnnnode} = [circle, draw, inner sep=0pt,minimum size=3.5ex]
    \draw node[dnnnode] (x) at (0, 0) {$x$};
    \draw node[dnnnode] (b) at (0, 1) {$1$};
    \draw node[dnnnode] (h1) at (1.5, -1) {$h_1$};
    \draw node[dnnnode] (h2) at (1.5, 0) {$h_2$};
    \draw node[dnnnode] (h3) at (1.5, 1) {$h_3$};
    \draw node[dnnnode] (y) at (3, 0) {$y$};

    \draw[->] (b) -- (h3) node[midway,above] {$-1$};
    \draw[->] (x) -- (h1) node[midway,below] {$-1$};
    \draw[->] (x) -- (h2) node[midway,above] {$1$};
    \draw[->] (x) -- (h3) node[midway,above] {#1};
    \draw[->] (h1) -- (y) node[midway,below] {$-1$};
    \draw[->] (h2) -- (y) node[midway,above] {$-1$};
    \draw[->] (h3) -- (y) node[midway,above] {$1$};
}

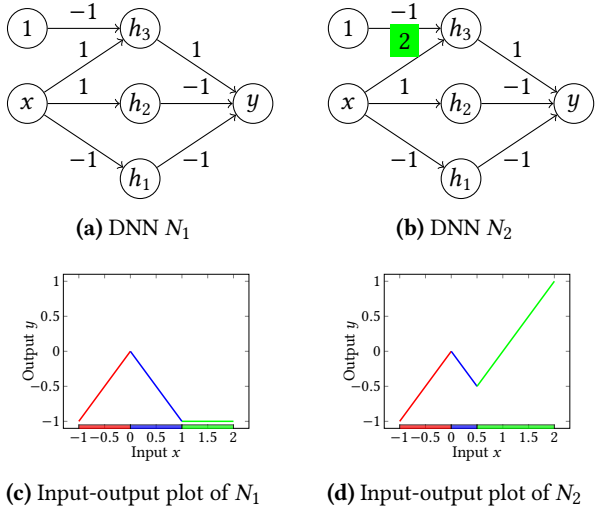
\begin{figure}[t]
    \begin{subfigure}[t]{0.4\linewidth}
        \centering
        \begin{tikzpicture}
            \overviewdnn{$1$}
        \end{tikzpicture}
        \caption{DNN $N_1$}
        \label{fig:OverviewDNN}
    \end{subfigure}
    \hspace{2em}
    \begin{subfigure}[t]{0.4\linewidth}
        \centering
        \begin{tikzpicture}
            \overviewdnn{\colorbox{green}{$2$}}
        \end{tikzpicture}
        \caption{DNN $N_2$}
        \label{fig:OverviewModifiedDNN}
    \end{subfigure}
    \\
    \vspace{1em}
    \begin{subfigure}[t]{0.4\linewidth}
        \centering
        \begin{tikzpicture}[scale=0.36]
            \begin{axis}[ymin=-1.1,ymax=1.1,xlabel={Input $x$},ylabel={Output $y$},font=\huge]
                \addplot[ultra thick,red,domain=-1:0,samples=2] {x};
                \addplot[ultra thick,blue,domain=0:1,samples=2] {-x};
                \addplot[ultra thick,green,domain=1:2,samples=2] {-x + (x - 1)};
                \draw[fill=red,opacity=.7] (axis cs:-1,-2) rectangle (axis cs:0,-1.05);
                \draw[fill=blue,opacity=.7] (axis cs:0,-2) rectangle (axis cs:1,-1.05);
                \draw[fill=green,opacity=.7] (axis cs:1,-2) rectangle (axis cs:2,-1.05);
            \end{axis}
        \end{tikzpicture}
        \caption{Input-output plot of $N_1$}
        \label{fig:OverviewDNNPlot}
    \end{subfigure}
    \hspace{2em}
    \begin{subfigure}[t]{0.4\linewidth}
        \centering
        \begin{tikzpicture}[scale=0.36]
            \begin{axis}[ymin=-1.1,ymax=1.1,xlabel={Input $x$},ylabel={Output $y$},font=\huge]
                \addplot[ultra thick,red,domain=-1:0,samples=2] {x};
                \addplot[ultra thick,blue,domain=0:0.5,samples=2] {-x};
                \addplot[ultra thick,green,domain=0.5:2,samples=2] {-x + ((2*x) - 1)};
                \draw[fill=red,opacity=.7] (axis cs:-1,-2) rectangle (axis cs:0,-1.05);
                \draw[fill=blue,opacity=.7] (axis cs:0,-2) rectangle (axis cs:0.5,-1.05);
                \draw[fill=green,opacity=.7] (axis cs:0.5,-2) rectangle (axis cs:2,-1.05);
            \end{axis}
        \end{tikzpicture}
        \caption{Input-output plot of $N_2$}
        \label{fig:OverviewModifiedDNNPlot}
    \end{subfigure}
    \caption{
        Example DNNs and their input-output behavior.
        The $h_i$ nodes have ReLU activation functions.
        Colored bars on the $x$ axis denote the linear regions.}
\end{figure}

\subsection{Provable Pointwise Repair}
\label{sec:OverviewPointPatching}
The first type of
specification we will consider is a \emph{point repair specification.} In
this scenario, we are given a finite set of input points along
with, for each such point, a subset of the output region which we would like
that point to be mapped into by the network.

Consider DNN $N_1$ in \pref{fig:OverviewDNN}.
We see that $N_1(0.5) = -0.5$ and $N_1(1.5) = -1$. We
want to \emph{repair} it to form a new network $N'$ such that
\begin{equation}
    \label{eq:OverviewPointSpec}
    (-1 \leq N'(0.5) \leq -0.8) \wedge (-0.2 \leq N'(1.5) \leq 0).
\end{equation}
We formalize this point specification as $(X, A^\cdot, b^\cdot)$ where $X$ is a
finite collection of \emph{repair points}
$X = \{ X_1 = 0.5, X_2 = 1.5 \}$,
and we associate with each $x \in X$ a polytope in the output space
defined by $A^x, b^x$ that we would like it to be mapped into by $N'$. In this
case, we can let
$
    A^{X_1} =
    {\footnotesize \begin{bmatrix}
        1 \\ -1
    \end{bmatrix}}
$,$
    b^{X_1} =
    {\footnotesize \begin{bmatrix}
        -0.8 \\ 1
    \end{bmatrix}}$,$
    A^{X_2} =
    {\footnotesize \begin{bmatrix}
        1 \\ -1
    \end{bmatrix}}$, and $
    b^{X_2} =
    {\footnotesize \begin{bmatrix}
        0 \\ 0.2
    \end{bmatrix}}
$
representing the polyhedral constraints
$
    A^{X_1} N'(X_1) \leq b^{X_1} \wedge A^{X_2} N'(X_2) \leq b^{X_2}.
$
These constraints are equivalent
to~\pref{eq:OverviewPointSpec}.

The general affine constraint
form we use is very expressive. For example, it can express constraints such as
``the $i^{\mathrm{th}}$ output component is larger than all others,'' which for a
multi-label classification network is equivalent to ensuring that the point is
classified with label $i$.

\subsubsubsection{The two roles of a ReLU.}
At first glance, it is tempting to directly encode the DNN in an SMT solver
like Z3~\citep{TACAS:deMB08} and attempt to solve for weight assignments that
cause the desired classification. However, in practice this quickly becomes
infeasible even for networks with very few nodes.

To understand the key reason for this infeasibility, consider what
happens when a single weight in $N_1$ is modified to construct the new DNN
$N_2$, shown in~\pref{fig:OverviewModifiedDNN}. In particular,
the weight on $x\to h_3$ is changed from a 1 to a 2.
Comparing~\pref{fig:OverviewModifiedDNNPlot} with~\pref{fig:OverviewDNNPlot},
we see that changing this weight has caused \emph{two} distinct changes in the
plot:
\begin{enumerate}
    \item The linear function associated with the green region has changed,
        and
    \item \emph{Simultaneously,} the linear regions themselves (shown on the
        $x$ axis) have changed, with the green region growing to include parts
        of the space originally in the blue region. In particular,
        $LinRegions(N_2, [-1, 2]) = \{[-1, 0], [0, 0.5], [0.5, 2]\}$, different
        from~\pref{eq:OverviewLinRegionsN1}.
\end{enumerate}

We use \emph{coupling} to refer to the fact that the weights in a ReLU
DNN simultaneously control \emph{both} of these aspects. This coupling causes
repair of DNNs to be computationally infeasible, because the
impact of changing a weight in the network with respect to the output of the
network on a fixed input is non-linear; it `jumps' every time the linear region
that the point falls into changes.
This paper shows that \emph{de-}coupling these two roles leads to a generalized
class of neural networks along with a polynomial-time repair algorithm.

\newcommand\avnodes[6][]{
    \tikzstyle{dnnnode} = [circle, inner sep=0pt,minimum size=3.5ex]
    \draw node[dnnnode,draw=red] (a#2) at (#3, #4) {#5};
    \draw node[dnnnode,draw=black] (v#2) at (\fpeval{#3 + 2.5}, \fpeval{#4 - 2.5}) {#6};
    \ifthenelse{\equal{#1}{mask}}{
        \draw [-|,blue,line width=0.5mm,opacity=0.6,dashed] (a#2) -- (v#2);
    }{}
}
\newcommand\avconnect[5][midway,above]{
    \draw[->] (a#2) -- (a#3) node[#1] {#4};
    \draw[->] (v#2) -- (v#3) node[#1] {#5};
}
\newcommand{\overviewddnnnodes}{
    \avnodes{x}{0}{0}{$x^a$}{$x^v$}
    \avnodes{b}{0}{1}{$1$}{$1$}
    \avnodes[mask]{y1}{1.5}{-1}{$h_1^a$}{$h_1^v$}
    \avnodes[mask]{y2}{1.5}{0}{$h_2^a$}{$h_2^v$}
    \avnodes[mask]{y3}{1.5}{1}{$h_3^a$}{$h_3^v$}
    \avnodes{z}{3}{0}{$y^a$}{$y^v$}
}
\newcommand{\overviewddnnedges}[2]{
    \avconnect{b}{y3}{$-1$}{$-1$}
    \avconnect[midway,below]{x}{y1}{$-1$}{$-1$}
    \avconnect{x}{y2}{$1$}{$1$}
    \ifthenelse{\equal{#2}{$1$}}{
        \avconnect[near start,above]{x}{y3}{#1}{\colorbox{white}{#2}}
    }{
        \avconnect[near start,above]{x}{y3}{#1}{\colorbox{green}{#2}}
    }
    \avconnect[near end,below]{y1}{z}{$-1$}{$-1$}
    \avconnect[yshift=-0.75mm,near start,above]{y2}{z}{$-1$}{$-1$}
    \avconnect{y3}{z}{$1$}{$1$}
}
\begin{figure}[t]
    \hspace{-2em}
    \begin{subfigure}[t]{0.4\linewidth}
        \begin{tikzpicture}[scale=0.75]
            \small
            \overviewddnnnodes{}
            \overviewddnnedges{$1$}{$1$}
        \end{tikzpicture}
        \caption{\footnotesize{Decoupled DNN $N_3 = (N_1, N_1)$.}}
        \label{fig:DecoupledDNN}
    \end{subfigure}
    \hspace{2em}
    \begin{subfigure}[t]{0.4\linewidth}
        \centering
        \begin{tikzpicture}[scale=0.75]
            \small
            \overviewddnnnodes
            \overviewddnnedges{$1$}{$2$}
        \end{tikzpicture}
        \caption{\footnotesize{Decoupled DNN $N_4 = (N_1, N_2)$.}}
        \label{fig:DecoupledDNNValue}
    \end{subfigure}
    \\
    \vspace{1em}
    \begin{subfigure}[t]{0.4\linewidth}
        \centering
        \begin{tikzpicture}[scale=0.36]
            \begin{axis}[ymin=-1.1,ymax=1.1,xlabel={Input $x^v (= x^a)$},ylabel={Output $y^v$},font=\huge]
                \addplot[ultra thick,red,domain=-1:0,samples=2] {x};
                \addplot[ultra thick,blue,domain=0:1,samples=2] {-x};
                \addplot[ultra thick,green,domain=1:2,samples=2] {-x + (x - 1)};
                \draw[fill=red,opacity=.7] (axis cs:-1,-2) rectangle (axis cs:0,-1.05);
                \draw[fill=blue,opacity=.7] (axis cs:0,-2) rectangle (axis cs:1,-1.05);
                \draw[fill=green,opacity=.7] (axis cs:1,-2) rectangle (axis cs:2,-1.05);
            \end{axis}
        \end{tikzpicture}
        \caption{\footnotesize{Input-output plot of $N_3$.}}
        \label{fig:DecoupledDNNPlot}
    \end{subfigure}
    \hspace{2em}
    \begin{subfigure}[t]{0.4\linewidth}
        \centering
        \begin{tikzpicture}[scale=0.36]
            \begin{axis}[ymin=-1.1,ymax=1.1,xlabel={Input $x^v (= x^a)$},ylabel={Output $y^v$},font=\huge]
                \addplot[ultra thick,red,domain=-1:0,samples=2] {x};
                \addplot[ultra thick,blue,domain=0:1,samples=2] {-x};
                \addplot[ultra thick,green,domain=1:2,samples=2] {-x + ((2*x) - 1)};
                \draw[fill=red,opacity=.7] (axis cs:-1,-2) rectangle (axis cs:0,-1.05);
                \draw[fill=blue,opacity=.7] (axis cs:0,-2) rectangle (axis cs:1,-1.05);
                \draw[fill=green,opacity=.7] (axis cs:1,-2) rectangle (axis cs:2,-1.05);
            \end{axis}
        \end{tikzpicture}
        \caption{\footnotesize{Input-output plot of $N_4$.}}
        \label{fig:DecoupledDNNValuePlot}
      \end{subfigure}
    \caption{Decoupled DNNs $N_3$ and $N_4$ and their input-output behavior.
    DDNNs $N_3$ and $N_4$ have the same activation channel $N_1$, but different value channels.}
\end{figure}
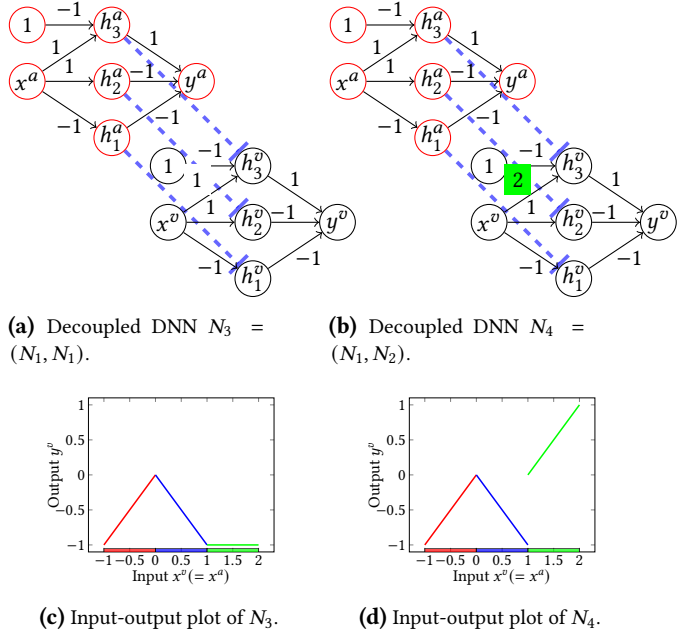

\subsubsubsection{Decoupling activations from values.}
The key insight of this paper is a novel DNN architecture, \emph{Decoupled
DNNs} (DDNNs), defined in~\pref{sec:DDNNs}, that strictly generalizes standard
feed-forward DNNs while at the same time allowing us to decouple the two roles
that the parameters play.

\pref{fig:DecoupledDNN} shows a DDNN $N_3$ equivalent to $N_1$
from \pref{fig:OverviewDNN}. Most notably, every decoupled DNN consists of
\emph{two} `sub-networks,' or \emph{channels.} The \emph{activation channel,}
shown in red, is used to determine the positions of the linear regions.
Meanwhile, the \emph{value channel} determines the output map within
each linear region. The activation channel influences the value channel via the
blue edges, which indicate that the adjacent value node is activated only if
the corresponding activation node is. For example, if the input to $h_2^a$ is
negative, then $h_2^v$ will output zero regardless of the input to~$h_2^v$.

To compute the output of a DDNN on a given input $x_0$, we first set $x^a =
x_0$, evaluate the activation channel, and record which of the hidden nodes
$h^a_i$ were active (received a positive input) or inactive (otherwise). Then,
we set $x^v = x_0$ and evaluate the value channel, except instead of activating
a node if \emph{its} input is non-negative, we activate the node if \emph{the
corresponding activation channel node was activated.} In this way, activation
nodes can `mask' their corresponding value nodes, as notated with the blue
edges in~\pref{fig:DecoupledDNN}.

Now, consider what happens when we change a weight in only the \emph{value
channel,} as shown in~\pref{fig:DecoupledDNNValue}. In that scenario, on any
given point, the activation pattern for any given input \emph{does not change,}
and so the locations of the linear regions on the $x$ axis
of~\pref{fig:DecoupledDNNValuePlot} are unchanged
from~\pref{fig:DecoupledDNNPlot}. However, what we find is that \emph{the
linear function within any given region does change.} Note that in this case
only the green line has changed, however in deeper networks changing any given
weight can change all of the lines.

\subsubsubsection{Repair of DDNNs.}
This observation foreshadows two of our key theoretical results
in~\pref{sec:DDNNs}. The first theorem (\pref{thm:LinearDDNN}) shows that, for any
given input, the output of the DDNN varies \emph{linearly} on the change of
any given weight in the value channel. In fact, we will show the stronger fact
that the output depends linearly with the change of any \emph{layer of weights}
in the value channel.

Using this fact, we can \emph{reduce pointwise repair of a single layer in the
DDNN to a linear programming (LP) problem.} In the running example, suppose we
want to repair the first value layer of DDNN $N_3$ to
satisfy~\pref{eq:OverviewPointSpec}. Let $\Delta$ be the difference in the
first layer weights, where $\Delta_i$ is the change in the weight on edge $x^v \to h^v_i$,
$\Delta_4$ is the change in the weight on edge $1 \to h^v_3$, and $N'$ be the DDNN with
first-layer value weights changed by $\Delta$. Then, \pref{thm:LinearDDNN} guarantees
that
$
    \small
    N'(X_1) =
    \begin{bmatrix} -0.5 \end{bmatrix}
    +
    \begin{bmatrix} 0 & -0.5 & 0 & 0 \end{bmatrix}
    \vec{\Delta}
    = -0.5 - 0.5\Delta_2,
$\\
while
$
    \small
    N'(X_2) =
    \begin{bmatrix} -1 \end{bmatrix}
    +
    \begin{bmatrix} 0 & -1.5 & 1.5 & 1 \end{bmatrix}
    \vec{\Delta}
    = -1 - 1.5\Delta_2 + 1.5\Delta_3 + \Delta_4.
$
Hence, we can encode our specification as an LP like so:
$\small
    (-1 \leq -0.5 - 0.5\Delta_2 \leq -0.8)
    \wedge
    (-0.2 \leq -1 - 1.5\Delta_2 + 1.5\Delta_3 + \Delta_4 \leq 0),
$
or in a more formal LP form,
\[
    \footnotesize
    \begin{bmatrix}
        0 & -0.5 & 0 & 0 \\
        0 & 0.5 & 0 & 0 \\
        0 & -1.5 & 1.5 & 1 \\
        0 & 1.5 & -1.5 & -1 \\
    \end{bmatrix}
    \begin{bmatrix}
        \Delta_1 \\ \Delta_2 \\ \Delta_3 \\ \Delta_4
    \end{bmatrix}
    \leq
    \begin{bmatrix}
        -0.3 \\ 0.5 \\ 1 \\ -0.8 \\
    \end{bmatrix}
\]

We can then solve for $\Delta$ using an off-the-shelf LP solver, such as
Gurobi~\citep{gurobi}.  We can also simultaneously optimize a linear
objective, such as the $\ell_\infty$ or $\ell_1$ norm, to find the
satisfying repair with the \emph{provably} smallest $\Delta$. This helps ensure
locality of the repair and preserve the otherwise-correct existing behavior of
the network. In this case, we can find that the smallest repair with respect to
the $\ell_1$ norm is
$
    \Delta_1 = 0, \Delta_2 = 0.6, \Delta_3 = 1.1\overline{3}, \Delta_4 = 0.
$
The corresponding repaired DDNN $N_5$ is shown
in~\pref{fig:OverviewPointPatchedDDNN} and plotted
in~\pref{fig:OverviewPointPatchedDDNNPlot}, where we can see that the repaired
network satisfies the constraints because $N_5(0.5) = -0.8$ and $N_5(1.5) =
-0.2$.  Notably, the linear regions of $N_5$ are the same as those of $N_1$.

\subsubsubsection{Non-ReLU, non-fully-connected, activation functions.}
While we have focused in this overview on the ReLU case for ease of
exposition, the key result of~\pref{thm:LinearDDNN} also holds for a
generalization of DDNNs using arbitrary activation functions, such as $\tanh$
and sigmoid. Hence, our pointwise repair algorithm works for arbitrary
feed-forward networks. Similarly, although we have formalized DNNs assuming
fully-connected layers, our approach can repair convolutional and other similar
types of layers as well (as demonstrated in~\pref{sec:EvaluationImageNet}).

\subsection{Provable Polytope Repair}
\label{sec:OverviewPolytopePatching}

We now consider \emph{Provable Polytope Repair}. The specification for provable
polytope repair constrains the output of the network on finitely-many
\emph{polytopes} in the input space, each one containing potentially
\emph{infinitely many points.} For example, given the DNN $N_1$ we may wish to
enforce a specification
\begin{equation}
    \label{eq:OverviewPolytopeSpec}
    \forall x \in [0.5, 1.5]. \quad -0.8 \leq N'(x) \leq -0.4.
\end{equation}
We represent this as a polytope specification with one input polytope,
$X = \{ P_1 = [0.5, 1.5] \}$, which should map to the polytope in the output
space given by
$A^{P_1} = {\footnotesize\begin{bmatrix} 1 \\ -1 \end{bmatrix}}$,
$b^{P_1} = {\footnotesize\begin{bmatrix} -0.4 \\ 0.8 \end{bmatrix}}$.
The constraint $\forall x \in P_1. A^{P_1}N'(x) \leq
b^{P_1}$ is then equivalent to the specification in~\pref{eq:OverviewPolytopeSpec}.

\begin{figure}[t]
    \hspace{-2em}
    \begin{subfigure}[t]{0.4\linewidth}
        \begin{tikzpicture}[scale=0.75]
            \small
            \overviewddnnnodes{}
            \avconnect{b}{y3}{$-1$}{$-1$}
            \avconnect[midway,below]{x}{y1}{$-1$}{$-1$}
            \avconnect[near end,above,yshift=-0.75mm]{x}{y2}{$1$}{\tiny\colorbox{green}{$1.6$}}
            \avconnect[near start,above,yshift=-0.3mm]{x}{y3}{$1$}{\tiny\colorbox{green}{$2.1\overline{3}$}}
            \avconnect[near end,below]{y1}{z}{$-1$}{$-1$}
            \avconnect[yshift=-0.75mm,near start,above]{y2}{z}{$-1$}{$-1$}
            \avconnect{y3}{z}{$1$}{$1$}
        \end{tikzpicture}
        \caption{Pointwise Repaired DDNN $N_5$.}
        \label{fig:OverviewPointPatchedDDNN}
    \end{subfigure}
    \hspace{2em}
    \begin{subfigure}[t]{0.4\linewidth}
        \begin{tikzpicture}[scale=0.75]
            \small
            \overviewddnnnodes{}
            \avconnect{b}{y3}{$-1$}{$-1$}
            \avconnect[midway,below]{x}{y1}{$-1$}{$-1$}
            \avconnect[near end,above,yshift=-0.75mm]{x}{y2}{$1$}{\tiny\colorbox{green}{$0.8$}}
            \avconnect[near start,above]{x}{y3}{$1$}{$1$}
            \avconnect[near end,below]{y1}{z}{$-1$}{$-1$}
            \avconnect[yshift=-0.75mm,near start,above]{y2}{z}{$-1$}{$-1$}
            \avconnect{y3}{z}{$1$}{$1$}
        \end{tikzpicture}
        \caption{Polytope Repaired DDNN $N_6$.}
        \label{fig:OverviewPolytopePatchedDDNN}
    \end{subfigure} \\
    \vspace{1em}
    \begin{subfigure}[t]{0.4\linewidth}
        \centering
        \begin{tikzpicture}[scale=0.36]
            \begin{axis}
            [ymin=-1.1,ymax=1.1,xlabel={Input $x^v (= x^a)$},ylabel={Output $y^v$},font=\huge,
             xmin=-1.15,xmax=2.15]
                \addplot[ultra thick,red,domain=-1:0,samples=2] {x};
                \addplot[ultra thick,blue,domain=0:1,samples=2] {-(1.6*x)};
                \addplot[ultra thick,green,domain=1:2,samples=2] {-(1.6*x) + (2.13333*x - 1)};
                \draw[fill=red,opacity=.7] (axis cs:-1,-2) rectangle (axis cs:0,-1.05);
                \draw[fill=blue,opacity=.7] (axis cs:0,-2) rectangle (axis cs:1,-1.05);
                \draw[fill=green,opacity=.7] (axis cs:1,-2) rectangle (axis cs:2,-1.05);
            \end{axis}
        \end{tikzpicture}
        \caption{Input-output plot of $N_5$.}
        \label{fig:OverviewPointPatchedDDNNPlot}
    \end{subfigure}
    \hspace{2em}
    \begin{subfigure}[t]{0.4\linewidth}
        \centering
        \begin{tikzpicture}[scale=0.36]
            \begin{axis}
            [ymin=-1.1,ymax=1.1,xlabel={Input $x^v (= x^a)$},ylabel={Output $y^v$},font=\huge,
             xmin=-1.15,xmax=2.15]
                \addplot[ultra thick,red,domain=-1:0,samples=2] {x};
                \addplot[ultra thick,blue,domain=0:1,samples=2] {-(0.8*x)};
                \addplot[ultra thick,green,domain=1:2,samples=2] {-(0.8*x) + (x - 1)};
                \draw[fill=red,opacity=.7] (axis cs:-1,-2) rectangle (axis cs:0,-1.05);
                \draw[fill=blue,opacity=.7] (axis cs:0,-2) rectangle (axis cs:1,-1.05);
                \draw[fill=green,opacity=.7] (axis cs:1,-2) rectangle (axis cs:2,-1.05);
            \end{axis}
        \end{tikzpicture}
        \caption{Input-output plot of $N_6$.}
        \label{fig:OverviewPolytopePatchedDDNNPlot}
      \end{subfigure}
    \caption{Repaired DDNNs.}
    \label{fig:OverviewPatchedDDNN}
\end{figure}
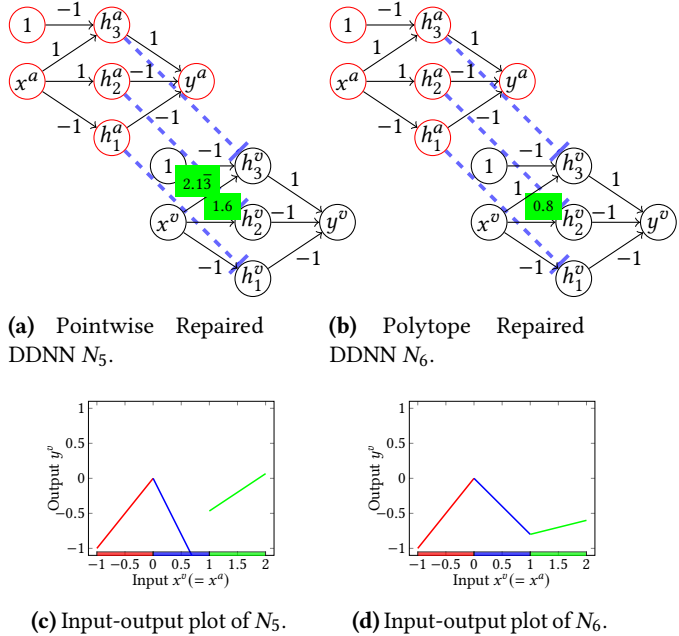

\subsubsubsection{Reduction of polytope repair to pointwise repair.}
Our key insight (\pref{thm:DDNNSameLinRegions}) is that, for piecewise-linear
DDNNs, if we only change the value channel parameters, then we can reduce
polytope repair to pointwise repair. To see this, recall that the value channel
parameters \emph{do not} change the location of the linear regions, only the
behavior within each one. Within each linear region, the behavior of the
network is linear, and, hence, convex. Convexity guarantees that any given polytope is
mapped into another polytope if and only if its \emph{vertices} are mapped into
that polytope.
Note that the assumption of piecewise-linearity is important here: in contrast
to pointwise patching, which works for any feed-forward DNN, our polytope
patching algorithm requires the activation functions to be piecewise-linear.

In our 1D example, this observation is the fact that a line lies in the desired
interval of $[-0.8, -0.4]$ if and only if its endpoints do. In fact, the input
region of interest in our example of $[0.5, 1.5]$ overlaps with two of these
lines (the {\color{blue}blue} and {\color{green}green} line segments in~\pref{fig:OverviewDNNPlot}).
Hence, we must ensure that both of those lines have endpoints in $[-0.8,
-0.4]$.

Thus, the polytope specification is met if and only if the point specification
with $K = \{ K_1 = 0.5, K_2 = 1, K_3 = 1, K_4 = 1.5 \}$ and
$
    A^{K_1} = A^{K_2} = A^{K_3} = A^{K_4} = {\footnotesize \begin{bmatrix} 1 \\ -1 \end{bmatrix}},
$
$
    b^{K_1} = b^{K_2} = b^{K_3} = b^{K_4} = {\footnotesize \begin{bmatrix} -0.4 \\ 0.8 \end{bmatrix}}
$
is met. We call the points in $K$ \emph{key points} because the behavior of the
repaired network $N'$ on these points determines the behavior of the network on
all of $P_1$.

Note that $K_2$ and $K_3$ both refer to the same input point, $1$. This is
because we need to verify that $N'(1)$ is in the desired output range when
approaching either from the left or the right, as we want to verify it for both
the blue and the green lines in~\pref{fig:DecoupledDNNPlot}.  This technicality
is discussed in more detail in~\onlyfor{arxiv}{\pref{app:VertexJacobians}}{the
extended version of this paper~\citep{extendedpaper}}.

Therefore, we have reduced the problem of repair on polytopes to repair on
finitely-many \emph{key points,} which are the vertices of the polytopes in the
specification intersected with the polytopes defining the linear regions of the
DNN. We can apply the algorithm discussed for pointwise repair to solve for a
minimal fix to the first layer. In particular, we get the linear constraints:
$
    -0.8 \leq -0.5 - 0.5\Delta_2 \leq -0.4,
$
$
    -0.8 \leq -1 - \Delta_2 \leq -0.4,
$
$
    -0.8 \leq -1 - \Delta_2 + \Delta_3 + \Delta_4 \leq -0.4,
$
and
$
    -0.8 \leq -1 - 1.5\Delta_2 + 1.5\Delta_3 + \Delta_4 \leq -0.4,
$
for which an $\ell_1$-minimal solution is the single weight change
$
    \Delta_2 = -0.2.
$
The corresponding repaired DDNN is shown
in~\pref{fig:OverviewPolytopePatchedDDNN} and plotted
in~\pref{fig:OverviewPolytopePatchedDDNNPlot}, which shows that the repaired
network satisfies the constraints.

\section{Decoupled DNNs}
\label{sec:DDNNs}

In this section, we formally define the notion of a \emph{Decoupled Deep Neural
Network} (DDNN), which is a novel DNN architecture that will allow for
polynomial-time layer repair.

A DDNN is defined similarly to a DNN (\pref{def:DNN}), except it has two sets
of weights; the \emph{activation channel} has weights $W^{(a, i)}$ and the
\emph{value channel} has weights $W^{(v, i)}$.
\begin{definition}
    \label{def:DDNN}
    A \emph{Decoupled DNN} (DDNN) having layers of size $s_0,
    \ldots, s_n$ is a list of triples $(W^{(a, 1)}, W^{(v, 1)}, \sigma^{(1)})$,
    $\ldots$, $(W^{(a, n)}, W^{(v, n)}, \sigma^{(n)})$, where $W^{(a, i)}$
    and $W^{(v, i)}$ are $s_i \times s_{i-1}$ matrices and $\sigma^{(i)} :
    \mathbb{R}^{s_i}\to \mathbb{R}^{s_i}$ is some \emph{activation function}.
\end{definition}

We now give the semantics for a DDNN. The input $\vec{v}$ is duplicated to form
the inputs $\vec{v}^{(a, 0)}$ and $\vec{v}^{(v, 0)}$ to the activation and
value channels, respectively. The semantics of the activation channel, having
\emph{activation vectors} $\vec{v}^{(a, i)}$, is the same as for a DNN
(\pref{def:DNNFunction}). The semantics for the value channel with
\emph{value vectors} $\vec{v}^{(v, i)}$ is similar, except instead of using the
activation function $\sigma^{(i)}$, we use the \emph{linearization} of
$\sigma^{(i)}$ around the input $W^{(a, i)}\vec{v}^{(a, i-1)}$ of the
corresponding activation layer, as defined
below.
\begin{definition}
    \label{def:Linearize}
    Given function $f : \mathbb{R}^n \to \mathbb{R}^m$ differentiable at
    $\vec{v_0}$, define the \emph{Linearization of $f$ around $\vec{v_0}$} to
    be the function:
    $
        Linearize[f, \vec{v_0}](\vec{x})
        \coloneqq f(\vec{v_0}) + D_{\vec{v}} f(\vec{v_0}) \times(\vec{x} - \vec{v_0}).
    $
\end{definition}
Above, $D_{\vec{v}} f(\vec{v}_0)$ is the Jacobian of $f$ with respect to its
input at the point point $\vec{v}_0$. The Jacobian generalizes the notion of a
scalar derivative to vector functions (see~\onlyfor{arxiv}{\pref{def:Jacobian}}{the extended version of this paper~\citep{extendedpaper}}).  The output of
the DDNN is taken to be the output $\vec{v}^{(v, n)}$ of the value channel.
These DDNN semantics are stated below.
\begin{definition}
    \label{def:DDNNFunction}
    The
    \emph{function $N: \mathbb{R}^{s_0}\to \mathbb{R}^{s_n}$ associated with
    the DDNN} $N$ with layers $(W^{(a,i)}, W^{(v,i)}, \sigma^{(i)})$ is given by
    $
        N(\vec{v}) = \vec{v}^{(v, n)}
    $
    where\\
    $\vec{v}^{(a, 0)} \coloneqq \vec{v}^{(v, 0)} \coloneqq \vec{v}$, \\
    $\vec{v}^{(a, i)} \coloneqq \sigma^{(i)}(W^{(a, i)}\vec{v}^{(a, i-1)})$, and \\
    $\vec{v}^{(v, i)} \coloneqq Linearize[\sigma^{(i)}, W^{(a, i)}\vec{v}^{(a,
    i-1)}](W^{(v, i)}\vec{v}^{(v, i-1)})$.
\end{definition}
DDNNs can be extended to non-differentiable activation functions as discussed
in~\onlyfor{arxiv}{\pref{app:NonDfbl}}{~\citet{extendedpaper}}.

\begin{figure}[t]
    \begin{subfigure}{0.4\linewidth}
        \hspace{-1cm}
        \begin{tikzpicture}
            [inoutdashed/.style={dashed,black,line width=0.1em},
             point/.style={circle,draw,fill=black,minimum size=0.5em},
             relu/.style={blue,line width=0.3em},
             linearized/.style={dashed,red,line width=0.3em},
             scale=0.6]
            \begin{axis}[
                xmin=-2.5, xmax=2.5,
                ymin=-1.5, ymax=1.5,
                axis lines=center,
                axis on top=true,
                domain=-2.5:2.5,font=\LARGE]

                \addplot [mark=none,draw=blue,ultra thick] {max(0, \x)};
                \addplot [mark=none,draw=orange,ultra thick,dashed] {\x};

                \node[point,red] at (axis cs: 1, 0) (ina) {};
                \node[below=0.5em of ina] {$W^{(a, i)}\vec{v}^{(a, i-1)}$};

                \draw[inoutdashed] (axis cs: 1, 0) -- (axis cs: 1, 1);

                \node[point,red] at (axis cs: 1, 1) (outa) {};
                \node[above=0.5em of outa] {$\vec{v}^{(a, i)}$};

                \node[point] at (axis cs: -1, 0) (inv) {};
                \node[above=0.5em of inv] {$W^{(v, i)}\vec{v}^{(v, i-1)}$};

                \draw[inoutdashed] (axis cs: -1, 0) -- (axis cs: -1, -1);

                \node[point] at (axis cs: -1, -1) (inv) {};
                \node[below=0.5em of inv] {$\vec{v}^{(v, i)}$};
            \end{axis}
        \end{tikzpicture}
        \caption{}
        \label{fig:ReLULinearize}
    \end{subfigure}
    \begin{subfigure}{0.4\linewidth}
        \centering
        \begin{tikzpicture}
            [inoutdashed/.style={dashed,black,line width=0.1em},
             point/.style={circle,draw,fill=black,minimum size=0.5em},
             relu/.style={blue,line width=0.3em},
             linearized/.style={dashed,red,line width=0.3em},
             scale=0.6]

            \begin{axis}[
                xmin=-2.5, xmax=2.5,
                ymin=-1.5, ymax=1.5,
                axis lines=center,
                axis on top=true,
                domain=-2.5:2.5,font=\LARGE]

                \addplot [mark=none,draw=blue,ultra thick] {tanh(\x)};
                \addplot [mark=none,draw=orange,ultra thick,dashed] {(\x + 1)/(cosh(-1))^2 + tanh(-1)};

                \node[point,red] at (axis cs: -1, 0) (ina) {};
                \node[above=0.5em of ina] {$W^{(a, i)}\vec{v}^{(a, i-1)}$};

                \draw[inoutdashed] (axis cs: -1, 0) -- (axis cs: -1, -0.761594);

                \node[point,red] at (axis cs: -1, -0.761594) (outa) {};
                \node[below=0.5em of outa] {$\vec{v}^{(a, i)}$};

                \node[point] at (axis cs: 0.2, 0) (inv) {};
                \node[above right=0.2em and 0.15em of inv,
                      fill=white,fill opacity=0.7,text opacity=1] {$W^{(i)}\vec{v}^{(v, i-1)}$};

                \draw[inoutdashed] (axis cs: 0.2, 0) -- (axis cs: 0.2, -0.255);

                \node[point] at (axis cs: 0.2, -0.255) (outv) {};
                \node[below right=0.05em and -0.75em of outv] {$\vec{v}^{(v, i)}$};
            \end{axis}
        \end{tikzpicture}
        \caption{}
        \label{fig:TanhLinearize}
    \end{subfigure}
    \vspace{-.5ex}
    \caption{(a) Linearized ReLU and (b) Linearized Tanh.}
    \vspace{-1ex}
\end{figure}
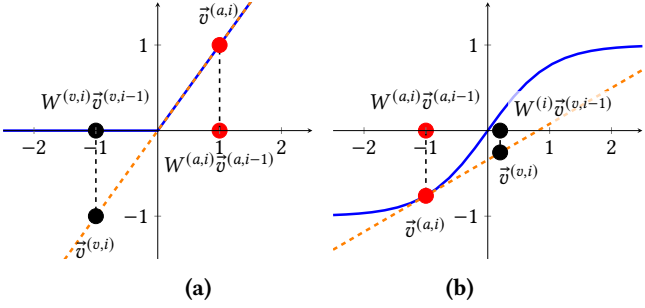

Consider the ReLU activation function in~\pref{fig:ReLULinearize}. We see that
the activation node gets an input of 1 (in red on the $x$ axis) and so produces
an output (in the activation channel) of~$1$. The linearization of the ReLU
function around the point 1 is the identity function $f(x) = x$ (shown in
orange). Thus, we use the function $f(x) = x$ as the activation function for
the corresponding value node. This means that if the value node gets an input
of, say,~$-1$ as shown in black in~\pref{fig:ReLULinearize}, then its output will
be~$-1$. Effectively, if the input to the \emph{activation node} is positive,
then the corresponding value node will be activated (i.e., pass its input
through as its output). On the other hand, if the input to the activation node
were negative, then the linearization would be the zero function $f(x) = 0$.
The value node would use that as its activation function, effectively
deactivating it regardless of its input from the value channel.

Consider also the Tanh activation function (\pref{fig:TanhLinearize}).
The activation channel behaves as normal, each
node outputting the Tanh of its input. For example, if the input to the
activation node is -1 (shown in red), then its output is $\tanh(-1)$ (shown
below it in red). However, for the value channel, we use the linearization of
$\tanh$ around the input to the corresponding activation node. In this case, we
linearize Tanh around -1 to get the line shown in orange, which is used as the
activation function for the value channel.

Thus, as we have shown, each node in the activation channel produces a new
activation function to be used in the value channel.

\subsubsubsection{Key results.}
The first key result shows that the class of DDNNs generalizes that of DNNs;
for any DNN, the below theorem gives a trivial construction for an exactly
equivalent DDNN by setting the activation and value channel weights to be
identical to the weights of the DNN.
\begin{theorem}
    \label{thm:CoupledDDNN}
    Let $N$ be a DNN with layers $(W^{(i)}, \sigma^{(i)})$ and $M$ be the DDNN
    with layers $(W^{(i)}, W^{(i)}, \sigma^{(i)})$. Then, as functions, $N =
    M$.
\end{theorem}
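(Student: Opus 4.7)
The plan is to prove $N(\vec{v}) = M(\vec{v})$ for every input $\vec{v}$ by a straightforward induction on the layer index, showing that the activation vector and the value vector of $M$ coincide at every layer, and hence both equal the corresponding intermediate vector in $N$.

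First I would set up the notation: let $\vec{v}^{(i)}$ denote the intermediate vectors from \pref{def:DNNFunction} applied to $N$, and let $\vec{v}^{(a,i)}, \vec{v}^{(v,i)}$ denote the activation and value vectors from \pref{def:DDNNFunction} applied to $M$. The claim I would prove by induction on $i$ is that $\vec{v}^{(a,i)} = \vec{v}^{(v,i)} = \vec{v}^{(i)}$. The base case $i=0$ is immediate, since all three vectors are defined to be the input $\vec{v}$.

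For the inductive step, assume $\vec{v}^{(a,i-1)} = \vec{v}^{(v,i-1)} = \vec{v}^{(i-1)}$. Because the activation and value weight matrices of $M$ both equal $W^{(i)}$, we have $W^{(a,i)}\vec{v}^{(a,i-1)} = W^{(v,i)}\vec{v}^{(v,i-1)} = W^{(i)}\vec{v}^{(i-1)}$; call this common vector $\vec{z}$. Then $\vec{v}^{(a,i)} = \sigma^{(i)}(\vec{z}) = \vec{v}^{(i)}$ directly from the semantics. The key observation driving the equality for the value channel is that, by \pref{def:Linearize}, for any differentiable $f$ and any $\vec{v}_0$ we have $Linearize[f, \vec{v}_0](\vec{v}_0) = f(\vec{v}_0)$, since the correction term $D_{\vec{v}}f(\vec{v}_0)(\vec{v}_0 - \vec{v}_0)$ vanishes. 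Applying this with $f = \sigma^{(i)}$ and $\vec{v}_0 = \vec{z}$, and noting that the input to the linearized function in $M$ is $W^{(v,i)}\vec{v}^{(v,i-1)} = \vec{z}$ as well, yields $\vec{v}^{(v,i)} = \sigma^{(i)}(\vec{z}) = \vec{v}^{(i)}$, closing the induction.

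There is no real obstacle here; the only subtlety worth mentioning is the handling of non-differentiable activation functions such as ReLU at the kink, where \pref{def:Linearize} strictly speaking does not apply. For this proof I would simply invoke the differentiable case and point the reader to the extension of DDNN semantics to non-differentiable activations promised after \pref{def:DDNNFunction}, under which the same identity $Linearize[\sigma^{(i)}, \vec{z}](\vec{z}) = \sigma^{(i)}(\vec{z})$ continues to hold by construction. Setting $i = n$ in the induction gives $M(\vec{v}) = \vec{v}^{(v,n)} = \vec{v}^{(n)} = N(\vec{v})$, as required.
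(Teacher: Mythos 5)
Your proof is correct and follows essentially the same route as the paper's: the same induction on the layer index showing $\vec{v}^{(a,i)} = \vec{v}^{(v,i)} = \vec{v}^{(i)}$, with the key step being that $Linearize[f,\vec{v}_0](\vec{v}_0) = f(\vec{v}_0)$, i.e., linearizations are exact at their center point. Your added remark on non-differentiable activations matches the paper's own treatment, which likewise defers that case to an extension of the DDNN semantics where the ``linearization'' is only required to agree with the activation function at its center.
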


\begin{proof}
    Let $\vec{v}$ be chosen arbitrarily. Let $\vec{v}^{(i)}$ be the
    intermediates of $N$ on $\vec{v}$ according to~\pref{def:DNNFunction}, and
    $\vec{v}^{(a, i)}$, $\vec{v}^{(v, i)}$ be the intermediates of $M$ on
    $\vec{v}$ according to~\pref{def:DDNNFunction}.

    We now prove, for all $i$, that $\vec{v}^{(v, i)} = \vec{v}^{(a, i)} =
    \vec{v}^{(i)}$. We proceed by induction on $i$. By definition,
    $\vec{v}^{(a, 0)} = \vec{v}^{(v, 0)} = \vec{v}^{(0)}$. Now, suppose for
    sake of induction that $\vec{v}^{(a, i)} = \vec{v}^{(v, i)} =
    \vec{v}^{(i)}$. Then we have by definition and the inductive hypothesis
    \[
        \vec{v}^{(a, i+1)}
        = \sigma^{(i+1)}(W^{(i+1)}\vec{v}^{(a, i)})
        = \sigma^{(i+1)}(W^{(i+1)}\vec{v}^{(i)})
        = \vec{v}^{(i+1)},
    \]
    as well as
    \[
        \small
        \begin{aligned}
            &\vec{v}^{(v, i+1)} \\
            =\quad &Linearize[\sigma^{(i+1)}, W^{(i+1)}\vec{v}^{(a, i)}](W^{(i+1)}\vec{v}^{(v, i)}) &&\text{(Definition)} \\
            =\quad &Linearize[\sigma^{(i+1)}, W^{(i+1)}\vec{v}^{(i)}](W^{(i+1)}\vec{v}^{(i)}) &&\text{(Ind. Hyp.)} \\
            =\quad &\sigma^{(i+1)}(W^{(i+1)}\vec{v}^{(i)}) &&\text{(Linearization)} \\
            =\quad &\vec{v}^{(i+1)}, &&\text{(Definition)}
        \end{aligned}
    \]
    because linearizations are exact at their center point.  By induction,
    then, $\vec{v}^{(v, i)} = \vec{v}^{(i)}$ for $0 \leq i \leq n$, and in
    particular $\vec{v}^{(v, n)} = \vec{v}^{(n)}$. But this is by definition
    $M(\vec{v}) = N(\vec{v})$, and as $\vec{v}$ was chosen arbitrarily, this
    gives us $N = M$ as functions.
\end{proof}

Our next result proves that the output of a DDNN varies
linearly with changes in any given value channel layer weights. Note that DDNNs
are \emph{not} linear functions with respect to their \emph{input}, only with
respect to the \emph{value weights.}
\begin{theorem}
    \label{thm:LinearDDNN}
    Let $j$ be a fixed index and $N$ be DDNN with layers $(W^{(a, i)}, W^{(v, i)}, \sigma^{(i)})$. Then, for any $\vec{v}$, $N(\vec{v})$ varies linearly
    \emph{as a function of $W^{(v, j)}$}.
\end{theorem}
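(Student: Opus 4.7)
The plan is a straightforward induction on the layer index $i$, exploiting the decoupled architecture: the activation channel is entirely independent of the value weights, so all linearization centers are fixed constants when we vary $W^{(v,j)}$.

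First I would fix $\vec{v}$, fix $j$, and fix all weights except $W^{(v,j)}$; treat $W^{(v,j)}$ as the sole free variable. The key preliminary observation is that each $\vec{v}^{(a,i)}$ defined in \pref{def:DDNNFunction} depends only on $\vec{v}$ and the activation-channel weights $W^{(a,1)}, \ldots, W^{(a,n)}$, and therefore is a constant vector. Consequently, for every layer $i$, the center of the linearization, namely $W^{(a,i)}\vec{v}^{(a,i-1)}$, is a constant, and hence the function $L_i(\vec{x}) \coloneqq Linearize[\sigma^{(i)}, W^{(a,i)}\vec{v}^{(a,i-1)}](\vec{x})$ is a fixed affine function of $\vec{x}$ (this is immediate from \pref{def:Linearize}, since a Jacobian evaluated at a fixed point yields a constant matrix).

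Next I would prove by induction on $i$ that each $\vec{v}^{(v,i)}$ is an affine function of $W^{(v,j)}$. For the base cases $i < j$, the recurrence $\vec{v}^{(v,i)} = L_i(W^{(v,i)}\vec{v}^{(v,i-1)})$ involves no dependence on $W^{(v,j)}$ at all, so $\vec{v}^{(v,i)}$ is constant, hence trivially affine. For $i = j$, $\vec{v}^{(v,j-1)}$ is constant by the previous step, so $W^{(v,j)}\vec{v}^{(v,j-1)}$ is linear in the entries of $W^{(v,j)}$, and applying the fixed affine map $L_j$ preserves affinity. For the inductive step $i > j$, assume $\vec{v}^{(v,i-1)}$ is affine in $W^{(v,j)}$; then $W^{(v,i)}\vec{v}^{(v,i-1)}$ is a composition of a fixed linear map with an affine function of $W^{(v,j)}$, hence affine, and applying the fixed affine map $L_i$ again preserves affinity. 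At $i = n$, this gives $N(\vec{v}) = \vec{v}^{(v,n)}$ affine in $W^{(v,j)}$, which is ``linear'' under the paper's convention that identifies linear with affine.

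There is no real obstacle here; the proof is essentially bookkeeping that rides on the definition of $Linearize$ and the closure of affine maps under composition. The only subtlety worth spelling out carefully is that $W^{(v,j)}$ is a matrix, so ``affine in $W^{(v,j)}$'' must be read as affine in its $s_j \cdot s_{j-1}$ scalar entries; this is immediate because $W^{(v,j)} \vec{v}^{(v,j-1)}$ is bilinear in $(W^{(v,j)}, \vec{v}^{(v,j-1)})$ and becomes linear in $W^{(v,j)}$ once $\vec{v}^{(v,j-1)}$ is fixed. I would close by emphasizing that the argument relies crucially on the activation channel being untouched: in a standard DNN the centers of the ``linearizations'' (the inputs to each $\sigma^{(i)}$) would themselves depend on $W^{(v,j)}$, breaking affinity, which is precisely the coupling that the DDNN architecture removes.
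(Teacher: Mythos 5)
Your proof is correct and follows essentially the same route as the paper's: both hinge on the observation that the activation channel (and hence every linearization center) is unaffected by changes to $W^{(v,j)}$, making layer $j$'s output linear in $W^{(v,j)}$ and every subsequent value layer a fixed affine map, so the composition is affine. The only cosmetic difference is that you run an explicit induction over all layers (with base cases $i<j$), whereas the paper reduces to $j=1$ without loss of generality and then invokes closure of linear maps under composition; your added remark about flattening the matrix $W^{(v,j)}$ into its scalar entries is a nice touch of rigor the paper leaves implicit.
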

\begin{proof}
    Changing $W^{(v, j)}$ does not modify the values of $\vec{v}^{(a, i)}$ or
    $\vec{v}^{(v, i)}$ for $i < j$, hence (i) we can assume WLOG that $j = 1$,
    and (ii) all of the $\vec{v}^{(a, i)}$s remain constant as we vary $W^{(v,
    1)}$.
    Consider now the value of \\
    $
        \vec{v}^{(v, 1)} =
            Linearize[\sigma^{(1)}, W^{(a, 1)}\vec{v}^{(a, 0)}](W^{(v, 1)}\vec{v}^{(v, 0)}).
    $
    This is by definition an linear function of $W^{(v,
    1)}\vec{v}^{(v, 0)}$, which is in turn an linear function of $W^{(v, 1)}$.

    Now, consider any $i > 1$. We have by definition
    $
        \vec{v}^{(v, i)} =
            Linearize[\sigma^{(i)}, W^{(a, i)}\vec{v}^{(a, i-1)}](W^{(v, i)}\vec{v}^{(v, i-1)}),
    $
    which, because we are fixing $W^{(v, i)}$ for $i > 1$, is an linear
    function \emph{with respect to $\vec{v}^{(v, i-1)}$.}

    We showed that $\vec{v}^{(1)}$ is linear \emph{with respect to $W^{(v,
    1)}$}, while $\vec{v}^{(v, i)}$ for $i > 1$ is linear \emph{with respect to
    $v^{(v, i-1)}$.} But compositions of linear functions are also linear,
    hence in total $\vec{v}^{(v, n)}$ is linear with respect to $W^{(v, 1)}$ as
    claimed.
\end{proof}

Our final result proves that modifying only the value
weights in a DDNN does \emph{not} change its linear regions.
\begin{theorem}
    \label{thm:DDNNSameLinRegions}
    Let $N$ be a PWL DNN with layers $(W^{(i)}, \sigma^{(i)})$ and define a
    DDNN $M$ with layers $(W^{(i)}, W^{(v, i)}, \sigma^{(i)})$. Then, within
    any linear region in $LinRegions(N)$, $M$ is also linear.
\end{theorem}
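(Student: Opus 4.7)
The plan is to exploit the fact that $M$'s activation channel is literally $N$, together with the observation that the linearization of a piecewise-linear function at any point in one of its linear pieces is exactly the affine function defining that piece.

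First, I would fix an arbitrary $X \in LinRegions(N)$ and let $\gamma$ be the activation pattern associated to $X$, so that for every $\vec{v} \in X$ and every layer $i$ we have $W^{(i)} \vec{v}^{(i-1)} \in \gamma(\sigma^{(i)})$, where $\vec{v}^{(i-1)}$ denotes the intermediate of $N$ on $\vec{v}$. By piecewise-linearity, there is an affine function $\hat{\sigma}_i(\vec{x}) = A_i \vec{x} + \vec{b}_i$ that agrees with $\sigma^{(i)}$ on all of $\gamma(\sigma^{(i)})$. Since $M$'s activation channel weights coincide with $N$'s weights, the activation channel intermediates $\vec{v}^{(a,i)}$ of $M$ on any input in $X$ coincide with the corresponding $\vec{v}^{(i)}$; in particular $W^{(a,i)} \vec{v}^{(a,i-1)} \in \gamma(\sigma^{(i)})$ as well.

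The next step is the key calculation: whenever a linearization center $\vec{v}_0$ lies in $\gamma(\sigma^{(i)})$, we have
\[
    Linearize[\sigma^{(i)}, \vec{v}_0](\vec{x})
    = \sigma^{(i)}(\vec{v}_0) + A_i(\vec{x} - \vec{v}_0)
    = A_i \vec{v}_0 + \vec{b}_i + A_i(\vec{x} - \vec{v}_0)
    = A_i \vec{x} + \vec{b}_i,
\]
independently of $\vec{v}_0$. Applying this to $\vec{v}_0 = W^{(a,i)} \vec{v}^{(a,i-1)}$ yields $\vec{v}^{(v,i)} = A_i W^{(v,i)} \vec{v}^{(v,i-1)} + \vec{b}_i$ for every $\vec{v} \in X$.

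Finally, I would induct on $i$ to show that $\vec{v}^{(v,i)}$ is an affine function of $\vec{v}$ on $X$: the base case $\vec{v}^{(v,0)} = \vec{v}$ is trivial, and the inductive step follows because $\vec{v}^{(v,i)}$ is the composition of the affine map $\vec{v}^{(v,i-1)} \mapsto A_i W^{(v,i)} \vec{v}^{(v,i-1)} + \vec{b}_i$ with an affine function of $\vec{v}$. Taking $i = n$ gives that $M(\vec{v}) = \vec{v}^{(v,n)}$ is affine on $X$, as required.

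The main obstacle is the subtle point buried in the key calculation: one must verify that the linearization of $\sigma^{(i)}$ at any point in its linear piece returns the whole affine extension of that piece, not just a local approximation; once that observation is made explicit, the rest reduces to a bookkeeping induction that mirrors the proof of Theorem~\ref{thm:CoupledDDNN}.
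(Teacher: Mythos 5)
Your proof is correct and follows essentially the same route as the paper's: the paper's one-line observation that ``all of the linearizations used in the computation of $\vec{v}^{(v,i+1)}$ do not change in the linear region'' is exactly your key calculation that $Linearize[\sigma^{(i)}, \vec{v}_0]$ equals the affine extension $A_i\vec{x} + \vec{b}_i$ of the active piece regardless of the center $\vec{v}_0 \in \gamma(\sigma^{(i)})$, after which both arguments conclude that the value channel is a composition of fixed affine maps. Your version simply makes explicit the details (the role of the shared activation pattern, and the closing induction) that the paper leaves implicit.
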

\begin{proof}
    Within any linear region of $N$, all of the activations are the same. This
    means that all of the linearizations used in the computation of
    $\vec{v}^{(v, i+1)}$ do not change in the linear region. Therefore, considering only the value
    channel, we can write $M(\vec{v}) = \vec{v}^{(v, n)}$ as a concatenation of
    linear functions, which is linear with respect to the input.
\end{proof}

\section{Provable Pointwise Repair}
\label{sec:PointPatching}

This section defines and gives an algorithm for provable pointwise repair.

\begin{definition}
    Let $N : \mathbb{R}^n \to \mathbb{R}^m$. Then 
    $(X, A^{\cdot}, b^{\cdot})$ is a \emph{pointwise repair specification} if
    $X$ is a finite subset of $\mathbb{R}^n$ and for each $x \in X$, $A^x$ is a
    $k_x \times m$ matrix while $b^x$ is a $k_x$-dimensional vector.
\end{definition}

\begin{definition}
    Let $N : \mathbb{R}^n \to \mathbb{R}^m$ and $(X,
    A^{\cdot}, b^{\cdot})$ be some pointwise repair specification. Then
    $N$ \emph{satisfies} $(X, A^{\cdot}, b^{\cdot})$, written $\sats{N}{(X,
    A^{\cdot}, b^{\cdot})}$, if $A^xN(x) \leq b^x$ for every $x \in X$.
\end{definition}

\begin{definition}
    Let $N$ be a DDNN and $(X, A^{\cdot}, b^{\cdot})$ be some pointwise repair
    specification. Then another DDNN $N'$ is a \emph{repair} of $N$ if
    $\sats{N'}{(X, A^{\cdot}, b^{\cdot})}$. It is a \emph{minimal repair} of
    $N$ if $\abs{\theta' - \theta}$ is minimal among all repairs, where
    $\theta$ and $\theta'$ are parameters of $N$ and $N'$, respectively, and
    $\abs{\cdot}$ is some user-defined measure of size (e.g., $\ell_1$ norm).
    It is a \emph{minimal layer repair} if it is minimal among all repairs that
    only modify a single, given layer.
\end{definition}

\subsubsubsection{Assumptions on the DNN.}
For point repair, we require only that the activation functions be
(almost-everywhere) differentiable so that we can compute a Jacobian. This is
already the case for every DNN trained via gradient descent, however even this
requirement can be dropped with slight modification to the algorithm
(see~\onlyfor{arxiv}{\pref{app:NonDfbl}}{the extended version of this paper~\citep{extendedpaper}}).  Of particular note, we \emph{do not} require that
the DNN be piecewise-linear.

\begin{algorithm}[t]
    \small
    \DontPrintSemicolon
    \KwIn{
        A DNN $N$ defined by a list of its layers. \\
        A layer index $i$ to repair. \\
        A finite set of points $X$. \\
        For each point $x \in X$ a specification $A^x, b^x$ asserting $A^xN'(x)
        \leq b^x$ where $N'$ is the repaired network.
    }
    \KwOut{
        A repaired DDNN $N'$ or $\bot$.
    }
    \tcc{$C$ is a set of linear constraints on the parameter delta $\vec{\Delta}$
    each of the form $(A, b)$ asserting $A\vec{\Delta} \leq b$.}
    $C \gets \emptyset$\;
    \tcc{Decouple the activation, value layers}
    $N^{'a}, N^{'v} = copy(N), copy(N)$\;
    \tcc{Construct DDNN $N'$ equivalent to DNN $N$}
    $N' \gets DecoupledNetwork(N^{'a}, N^{'v})$\;
    \For{$x \in X$}{
        \tcc{Jacobian wrt parameters of layer $N^{'v}_i$}
        $J^x \gets D_{params(N^{'v}_i)} N'(x)$\;
        \tcc{Encoded constraint $A^x(N(x) + J^x\vec{\Delta}) \leq b^x$}
        $C \gets C \cup \{ (A^xJ^x, b^x - A^xN(x)) \}$\;
    }
    $\vec{\Delta} \gets Solve(C)$\;
    \lIf{$\vec{\Delta} = \bot$}{
        \returnKw{$\bot$}
    }
    \tcc{Update value layer $i$.}
    $params(N^{'v}_i) \gets params(N^{'v}_i) + \vec{\Delta}$\;
    \returnKw{$N'$}\;

    \caption{$\PointR(N, i, X, A^\cdot, b^\cdot)$}
    \label{alg:pointwise}
\end{algorithm}

\subsubsubsection{Algorithm.}
\pref{alg:pointwise} presents our pointwise repair algorithm, which 
reduces provable pointwise repair to an LP.
$params(L)$ returns the parameters of layer $L$.
The notation $D_{params(N^{'v}_i)} N'(x)$ refers to the Jacobian of
the DDNN $N'(x)$ as a function of the parameters $params(N^{'v}_i)$ of the
$i^{\mathrm{th}}$ value channel layer, i.e., $W^{(v, i)}$, while fixing input $x$. Given a set of affine
constraints $C$, $Solve(C)$ returns a solution to the set of constraints or
$\bot$ if the constraints are infeasible.  $Solve$ also guarantees to return
the \emph{optimal} solution according to some user-defined objective function,
e.g., minimizing the $\ell_1$ or $\ell_\infty$ norm. Finally,
$DecoupledNetwork(a, v)$ constructs a decoupled neural network with activation
layers $a$ and value layers $v$.
The next two theorems show the correctness, minimality, and running time of~\pref{alg:pointwise}.

\begin{theorem}
    \label{thm:PointCorrect}
    Given a DNN $N$, a layer index $i$, and a point repair specification $(X,
    A^\cdot, b^\cdot)$, let $N' = \PointR(N, i, X, A^\cdot, b^\cdot)$.
    If $N' \neq \bot$, then $\sats{N'}{(X,
    A^{\cdot}, b^{\cdot})}$ and $\vec{\Delta}$ is a minimal layer repair.
    Otherwise, if $N' = \bot$, then no such single-layer DDNN repair satisfying
    the specification exists for the $i^{\mathit{th}}$ layer.
\end{theorem}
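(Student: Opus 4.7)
The plan is to show that the linear program constructed by $\PointR$ is an exact encoding of the specification, so that correctness and minimality follow from properties of $Solve$, and infeasibility of the LP implies no single-layer value-channel repair exists.

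First, I would argue that the initial DDNN $N'$ built by $DecoupledNetwork$ computes the same function as $N$: this is immediate from~\pref{thm:CoupledDDNN}, since its activation and value channels both carry the weights of $N$. In particular, $N'(x) = N(x)$ for every $x \in X$. Next I would invoke~\pref{thm:LinearDDNN}: if we perturb the parameters of value layer $i$ (and only those) by $\vec{\Delta}$ to obtain a new DDNN $N'_{\vec{\Delta}}$, then $N'_{\vec{\Delta}}(x)$ is an exact \emph{affine} function of $\vec{\Delta}$. Because the first-order Taylor expansion of an affine function is exact, we therefore have the global identity
\[
    N'_{\vec{\Delta}}(x) = N'(x) + J^x\vec{\Delta} = N(x) + J^x\vec{\Delta},
\]
where $J^x = D_{params(N^{'v}_i)}N'(x)$ is exactly the Jacobian computed in the loop of~\pref{alg:pointwise}. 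This is the crux of the argument: linearity in the value-layer parameters is what makes the LP relaxation tight rather than merely a local approximation.

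With this identity in hand, the constraint $A^x N'_{\vec{\Delta}}(x) \le b^x$ rewrites to $A^x J^x \vec{\Delta} \le b^x - A^x N(x)$, which is precisely the pair $(A^x J^x, b^x - A^x N(x))$ added to $C$. Hence a vector $\vec{\Delta}$ satisfies all constraints in $C$ if and only if $\sats{N'_{\vec{\Delta}}}{(X, A^\cdot, b^\cdot)}$. Now I would split on the return value. If $Solve(C) = \bot$, then $C$ is infeasible by assumption on $Solve$, so no $\vec{\Delta}$ makes $N'_{\vec{\Delta}}$ satisfy the specification; since every single-layer repair of value layer $i$ corresponds to some such $\vec{\Delta}$, no repair exists. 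If $Solve(C) = \vec{\Delta} \ne \bot$, then $\vec{\Delta}$ satisfies $C$, so the returned DDNN satisfies the specification, and minimality of $\abs{\vec{\Delta}}$ among feasible solutions follows from the optimality guarantee of $Solve$ with respect to the user-defined objective.

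The main obstacle I anticipate is being precise about the claim ``no such single-layer DDNN repair exists,'' which I would handle by noting that the parameters of value layer $i$ of any candidate repair $N''$ differ from those of $N$ by \emph{some} $\vec{\Delta}''$, and $N''$ equals $N'_{\vec{\Delta}''}$ as constructed above; infeasibility of $C$ then rules out every such $\vec{\Delta}''$. The only other subtle point is that~\pref{thm:LinearDDNN} gives an exactly affine dependence (not merely a differentiable one), which is what licenses identifying the Jacobian-based LP with the true repair problem rather than a linearization of it.
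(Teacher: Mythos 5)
Your proposal is correct and follows essentially the same route as the paper's proof: both establish equivalence of the initial DDNN with $N$ via \pref{thm:CoupledDDNN}, use \pref{thm:LinearDDNN} to argue that the Jacobian-based linearization is \emph{exact} (so the LP constraints encode the specification precisely, not approximately), and then derive satisfaction, minimality, and the infeasibility claim from the guarantees of $Solve$. Your explicit remark that every candidate single-layer repair corresponds to some $\vec{\Delta}$ (so LP infeasibility rules out all of them) is a slightly more careful rendering of a step the paper leaves implicit, but it is the same argument.
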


\begin{proof}
    Lines 2--3 construct a DDNN $N'$ equivalent to the DNN
    $N$~(\pref{thm:CoupledDDNN}). For each point $x \in X$, line 5 considers
    the \emph{linearization of $N'$ around the parameters} for the value
    channel layer $N^{'v}_i$, namely:
    $
        N'(x; \vec{\Delta}) \approx N'(x; 0) + J^x\vec{\Delta}
    $
    where $N'(x; \vec{\Delta})$ is the output of the DDNN when the parameters of the
    $i$th layer are changed by $\vec{\Delta}$. In particular,
    by~\pref{thm:CoupledDDNN} if $\vec{\Delta} = 0$, $N'$ is equivalent to $N$ (as a
    function). Hence $N'(x; \vec{\Delta}) \approx N(x) + J^x\vec{\Delta}$.  Finally,
    according to~\pref{thm:LinearDDNN}, this linear approximation is
    \emph{exact} for the DDNN when we only modify the parameters for a single
    value channel layer $N^{'v}_i$, i.e., $N'(x; \vec{\Delta}) = N(x) + J^x\vec{\Delta}$.

    Thus, after the for loop, the set $C$ contains constraints asserting that
    $A^xN'(x; \vec{\Delta}) \leq b^x$, which are exactly the constraints which our
    algorithm needs to guarantee.  Finally, we solve the constraints for
    $\vec{\Delta}$ using an LP solver and return the final DDNN. Hence, if $Solve$
    returns $\bot$, there is no satisfying repair. If it returns a repair, then
    the LP solver guarantees that it satisfies the constraints and no smaller
    $\vec{\Delta}$ exists.
\end{proof}

\begin{theorem}
    \label{thm:PointFast}
    \pref{alg:pointwise} halts in polynomial time with respect to the size of
    the point repair specification $(X, A^\cdot, b^\cdot)$.
\end{theorem}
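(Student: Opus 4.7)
The plan is to bound the running time of each step of \pref{alg:pointwise} by a polynomial in the size $s := \sum_{x \in X}(k_x \cdot m + n)$ of the specification (and in the size of $N$, which is held fixed as a parameter of the repair problem). The algorithm has three phases that need to be accounted for: (i) decoupling and setup, (ii) the constraint-assembly loop over $X$, and (iii) the call to $Solve$.

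First I would handle setup: lines 2--3 simply copy $N$ and wrap the two copies as a DDNN, which is linear in $|N|$ and independent of the specification. Next, for each $x \in X$ the loop computes the Jacobian $J^x = D_{params(N^{'v}_i)} N'(x)$ and appends the constraint $A^x J^x \vec{\Delta} \leq b^x - A^x N(x)$. The Jacobian has $m$ rows and $\dim(params(N^{'v}_i))$ columns, and can be computed by a single forward pass to obtain all intermediate vectors $\vec{v}^{(v,j)}$ followed by standard reverse-mode automatic differentiation; each call costs $O(\mathrm{poly}(|N|))$ time. Forming $A^x J^x$ and $b^x - A^x N(x)$ adds another $O(k_x \cdot m \cdot \dim(params(N^{'v}_i)))$ time per point, which is polynomial in $k_x$, $m$, and $|N|$. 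Summing over $|X|$ iterations gives a total polynomial bound for the loop.

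I would then analyze the $Solve$ call. After the loop, $C$ encodes an LP in $\dim(params(N^{'v}_i))$ variables with $\sum_{x \in X} k_x$ constraints, each constraint having coefficients of polynomial bit-size in the input (the entries of $A^x$, $b^x$, $N(x)$, and $J^x$, all derived from the given specification and $N$ by polynomial-time arithmetic). By Khachiyan's polynomial-time algorithm for linear programming~\citep{khachiyanpolylp}, $Solve(C)$ terminates in time polynomial in the bit-size of this LP, hence polynomial in the size of the specification. Adding $\vec{\Delta}$ to $params(N^{'v}_i)$ on line 10 is a constant-sized update with respect to the specification, so combining all bounds yields the claim.

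The only subtlety — and the closest thing to an obstacle — is justifying that the Jacobian entries and the products $A^x J^x$ have polynomial bit-length so that we can legitimately invoke Khachiyan's bound. I would address this by noting that each entry of $J^x$ is a finite sum of products of entries of the $W^{(a,j)}$, $W^{(v,j)}$, and derivative values of $\sigma^{(j)}$ evaluated at points computed in the forward pass. Since the DNN and specification are given with finite precision and only finitely many arithmetic operations are performed per entry, every number produced by the algorithm has bit-length polynomial in the inputs, so the standard polynomial-time guarantees of LP carry through.
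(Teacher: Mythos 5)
Your proof is correct and takes essentially the same route as the paper's: the constraint set $C$ is an LP with one row per row of each $A^x$ and one column per parameter of the repaired layer $N^{'v}_i$, both polynomial in the input, so polynomial-time LP solvability gives the result. The paper's version is much terser---it relegates the polynomial-time Jacobian computation to a post-proof remark stated as an assumption on the activation functions, whereas you argue that step (and the bit-length bounds needed to legitimately invoke Khachiyan's algorithm) explicitly, which is a more careful rendering of the same argument rather than a different one.
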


\begin{proof}
    The LP corresponding to $C$ has one row per row of $A^x$ and one column per
    weight in $N^{'v}_i$, both of which are included in the size of the input.
    Thus, as LPs can be solved in polynomial time, the desired result follows.
\end{proof}
Notably, the above proof assumes the Jacobian computation on line 5 takes
polynomial time; this is the case for all common activation functions used in
practice. The authors are not aware of any actual or proposed activation
function that would violate this assumption.

\section{Provable Polytope Repair}
\label{sec:PolytopePatching}

This section defines and gives an algorithm for provable polytope repair.

\begin{definition}
    Let $N : \mathbb{R}^n \to \mathbb{R}^m$. Then 
    $(X, A^{\cdot}, b^{\cdot})$ is a \emph{polytope repair specification} if
    $X$ is a finite set of bounded convex polytopes in $\mathbb{R}^n$ and for
    each $P \in X$, $A^P$ is a $k_P \times m$ matrix while $b^P$ is a
    $k_P$-dimensional vector.
\end{definition}

\begin{definition}
    Let $N : \mathbb{R}^n \to \mathbb{R}^m$ and $(X,
    A^{\cdot}, b^{\cdot})$ be a polytope repair specification. Then
    $N$ \emph{satisfies} $(X, A^{\cdot}, b^{\cdot})$, written $\satsp{N}{(X,
    A^{\cdot}, b^{\cdot})}$, if $A^PN(x) \leq b^P$ for every $P \in X$ and $x
    \in P$.
\end{definition}

\begin{definition}
    Let $N$ be a DDNN and $(X, A^{\cdot}, b^{\cdot})$ be some polytope repair
    specification. Then another DDNN $N'$ is a \emph{repair} of $N$ if
    $\satsp{N'}{(X, A^{\cdot}, b^{\cdot})}$. It is a \emph{minimal repair} of
    $N$ if $\abs{\theta' - \theta}$ is minimal among all repairs, where
    $\theta$ and $\theta'$ are parameters of $N$ and $N'$ respectively, and
    $\abs{\cdot}$ is some user-defined measure of size (e.g., $\ell_1$ norm).
    It is a \emph{minimal layer repair} if it is minimal among all repairs that
    only modify a single, given layer.
\end{definition}
The quantification in the definition of $\satsp{N}{(X, A^{\cdot},
b^{\cdot})}$ is over an \emph{infinite} set of points $x \in P$.  The rest of
this section is dedicated to \emph{reducing} this infinite quantification to an
equivalent finite one.

\subsubsubsection{Assumptions on the DNN.}
For polytope repair, we assume that the activation functions used by the DNN
are \emph{piecewise-linear} (\pref{def:PWL}). This allows us to \emph{exactly}
reduce polytope repair to point repair, meaning a satisfying repair exists
if and only if the corresponding point repair problem has a solution.
\pref{sec:Conclusion} discusses future work on extending this approach to
non-PWL activation functions.

\subsubsubsection{Algorithm.}
Our polytope repair algorithm is presented in~\pref{alg:polytopewise}. We
reduce the polytope specification $(X, A^{\cdot}, b^{\cdot})$ to a
provably-equivalent \emph{point} specification $(X', A^{'\cdot}, b^{'\cdot})$.
For each polytope in the polytope repair specification, we assert the same
constraints in the point specification except only on the \emph{vertices} of
the linear regions of $N$ on that polytope.
The next two theorems show the correctness, minimality, and running time of~\pref{alg:polytopewise}.
\begin{theorem}
    \label{thm:PolyCorrect}
    Let $N' = \PolyR(N, i, X, A^\cdot, b^\cdot)$ for a
    given DNN $N$, layer index $i$, and polytope repair specification
    $(X, A^\cdot, b^\cdot)$.
    If $N' \neq \bot$, then $\satsp{N'}{(X, A^{\cdot}, b^{\cdot})}$
    and $\vec{\Delta}$ is a minimal layer repair.
    Otherwise, if $N' = \bot$, then no such single-layer DDNN repair satisfying
    the specification exists for the $i^{\mathit{th}}$ layer.
\end{theorem}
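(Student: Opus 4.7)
The plan is to reduce the infinite constraint $A^P N'(x) \leq b^P$ for all $x \in P$ to a finite set of constraints on the vertices of the linear regions within $P$, then appeal to \pref{thm:PointCorrect} applied to the key-point specification $(X', A^{'\cdot}, b^{'\cdot})$ produced by $\PolyR$. The reduction rests on three facts: (i) by \pref{thm:DDNNSameLinRegions}, modifying only the value weights of layer $i$ leaves $LinRegions(N)$ unchanged, so the partitioning of each $P\in X$ into linear regions is the same for $N$ and for every candidate repair $N'$; (ii) within each such region $R$, $N'$ agrees with a single affine function; and (iii) the output constraint set $\{y : A^P y \leq b^P\}$ is convex.

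The first step is to establish the forward direction. For each $P \in X$ and each $R \in LinRegions(N, P)$, the restriction $N'\!\restriction_R$ is affine, so the image $N'(R)$ is the convex hull of the images of the vertices $V(R)$ of $R$. Since $\{y : A^P y \leq b^P\}$ is convex, we have $A^P N'(x) \leq b^P$ for every $x \in R$ if and only if $A^P N'(v) \leq b^P$ for every vertex $v \in V(R)$. Taking the union over all $P$ and all $R$ yields the finite key-point set $X'$ used by the algorithm, along with copies of $(A^P, b^P)$ for each key point. Hence the quantifier $\forall x \in P$ in the definition of $\satsp{\cdot}{\cdot}$ is logically equivalent to the finite conjunction that defines $\sats{N'}{(X', A^{'\cdot}, b^{'\cdot})}$, and this equivalence holds for \emph{every} candidate single-layer value-channel repair, not merely the one that the algorithm returns.

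With the reduction in hand, correctness and minimality follow by invoking \pref{thm:PointCorrect} on the reduced instance. If $\PointR$ returns $N' \neq \bot$, then $\sats{N'}{(X', A^{'\cdot}, b^{'\cdot})}$, and by the equivalence just established, $\satsp{N'}{(X, A^{\cdot}, b^{\cdot})}$. Moreover, the space of admissible single-layer value-channel repairs is the same LP feasible region for the polytope problem as for the reduced point problem, so an $\ell_1$/$\ell_\infty$-minimal solution for the latter is minimal for the former, giving minimality. Conversely, if $\PointR$ returns $\bot$, then no $\vec{\Delta}$ satisfies the key-point constraints; since those constraints are \emph{necessary} for polytope satisfaction (key points lie in $P$), no single-layer DDNN repair of layer $i$ satisfies the polytope specification either.

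The main obstacle is justifying the vertex-enumeration step precisely: $\PolyR$ must enumerate the vertices of $R = P \cap R'$ for $R' \in LinRegions(N)$, and a single geometric vertex may be shared between adjacent regions where $N'$ has different affine extensions, so the correct key-point set is not simply ``vertices of $P$'' but ``vertices of every cell of the arrangement $LinRegions(N, P)$,'' with one copy per incident cell. This is exactly the subtlety flagged in the overview (the $K_2, K_3$ duplication at $x = 1$) and is handled by one-sided Jacobians as discussed in the referenced appendix; once this bookkeeping is in place, the convex-hull argument in the first step goes through without change.
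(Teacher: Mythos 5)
Your proposal is correct and follows essentially the same route as the paper's own proof: use \pref{thm:DDNNSameLinRegions} to fix the linear regions across all candidate value-channel repairs, argue that affine maps plus convexity of $\{y : A^P y \leq b^P\}$ reduce containment on each region to containment at its vertices, and then invoke \pref{thm:PointCorrect} on the resulting key-point specification. If anything, your write-up is slightly more careful than the paper's — phrasing the region argument via convex hulls of vertex images rather than ``vertices map to vertices,'' and explicitly noting that the equivalence holds for \emph{every} candidate repair (which is what makes the minimality and $\bot$ conclusions carry over) and that shared vertices need one copy per incident region.
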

\begin{proof}
    Consider an arbitrary $P \in X$ and arbitrary linear region $R \in
    LinRegions(N, P)$. By~\pref{thm:DDNNSameLinRegions}, linear regions are the
    same in the original network $N$ and repaired network $N'$. Hence $R$ is
    also a linear region in $N'$. Thus, on $R$, $N'$ is equivalent to some
    linear function. It is a known result in convex geometry that linear
    functions map polytopes to polytopes and vertices to vertices, i.e., the
    vertices of the postimage $N'(R)$ are given by $N'(v)$ for each vertex $v$
    of $R$. The polytope $N'(R)$ is contained in another polytope if and only
    if its vertices are. Therefore, $N'(R)$ is contained in the polytope
    defined by $A^P, b^P$ if and only if its vertices $N'(v)$ are.  Because
    $P$ and $R$ were chosen arbitrarily and contain all of $X$, the constructed
    point repair specification $(X', A^{'\cdot}, b^{'\cdot})$ is equivalent
    to the polytope repair specification $(X, A^\cdot, b^\cdot)$.  The
    claimed results then follow directly from~\pref{thm:PointCorrect}.
\end{proof}

\begin{theorem}
    \label{thm:PolyFast}
    \pref{alg:polytopewise} halts in polynomial time with respect to the size
    of the polytope repair specification $(X, A^\cdot, b^\cdot)$ and the
    number of vertex points $v$.
\end{theorem}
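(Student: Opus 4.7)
The plan is to reduce this theorem directly to \pref{thm:PointFast}. Algorithm~\ref{alg:polytopewise} consists of two phases: first, constructing a point repair specification $(X', A^{'\cdot}, b^{'\cdot})$ from the polytope repair specification by taking the vertices of each $R \in LinRegions(N, P)$ for every $P \in X$; second, invoking $\PointR$ on that point specification. So the total running time is the sum of the time to build $(X', A^{'\cdot}, b^{'\cdot})$ and the time for $\PointR$ to return.

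For the first phase, once the vertex points are given, building the point specification is straightforward: for each vertex $v$ of each linear region inside each polytope $P$, we emit one repair point with constraints $(A^P, b^P)$. This is $O(v)$ emit operations, each of cost polynomial in the size of the individual $(A^P, b^P)$ blocks, so the total cost is polynomial in $v$ and in the size of $(X, A^\cdot, b^\cdot)$. I would state this as a direct observation rather than a calculation.

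For the second phase, \pref{thm:PointFast} guarantees that $\PointR$ halts in time polynomial in the size of the point specification it receives. The point specification we built has at most $v$ points, each carrying a constraint matrix of size bounded by $\max_{P \in X} k_P \times m$. Hence its size is polynomial in $v$ and in the size of $(X, A^\cdot, b^\cdot)$, and so $\PointR$ also runs in time polynomial in those quantities.

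The one thing worth flagging, rather than an actual obstacle, is the reason the theorem statement explicitly parameterizes runtime by the number of vertex points $v$ instead of just by the size of the polytope specification: in general the number of vertices of the partition $LinRegions(N, P)$ can be large relative to the description of $P$ itself (this is why the paper restricts computing $LinRegions$ to low-dimensional $P$ using the algorithm of~\citet{syrenn}). The proof I sketched above sidesteps this by taking $v$ as given in the input accounting, exactly as the theorem statement does, so the reduction to \pref{thm:PointFast} goes through cleanly.
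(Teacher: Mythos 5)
Your proof is correct and takes essentially the same route as the paper's: the paper's entire proof is that the bound ``follows directly from the time bounds we have established earlier on $\PointR$ in~\pref{thm:PointFast},'' which is exactly your reduction, with your phase-one accounting and your remark on why the statement is parameterized by the vertex count $v$ simply making explicit what the paper leaves implicit (and discusses informally after the theorem).
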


\begin{proof}
    This follows directly from the time bounds we have established earlier on
    $\PointR$ in~\pref{thm:PointFast}.
\end{proof}

\begin{algorithm}[t]
    \small
    \DontPrintSemicolon
    \KwIn{
        A piecewise-linear DNN $N$ with $n$ inputs and $m$ outputs defined by a
        list of its layers. \\
        A layer index $i$ to repair. \\
        A finite set of polytopes $X$. \\
        For each polytope $P \in X$ a specification $A^P, b^P$ asserting $A^PN'(x)
        \leq b^P$ for every $x \in P$ where $N'$ is the repaired network.
    }
    \KwOut{
        A repaired DDNN $N'$ or $\bot$.
    }
    \tcc{Point repair specification}
    $(X', A^{'\cdot}, b^{'\cdot}) \gets (\emptyset, \emptyset, \emptyset)$\;
    \For{$P \in X$}{
        \For{$R \in LinRegions(N, P)$}{
            \For{$v \in Vertices(R)$}{
                $X'.push(v)$\;
                $A^{'v}, b^{'v} \gets A^P, b^P$\;
            }
        }
    }
    \returnKw{$\PointR(N, i, X', A^{'\cdot}, b^{'\cdot})$}

    \caption{$\PolyR(N, i, X, A^{\cdot}, b^{\cdot})$}
    \label{alg:polytopewise}
\end{algorithm}

The running time of \pref{alg:polytopewise} depends on the number of linear
regions and the number of vertices in each region (line 4). Although in the
worst case there are exponentially-many such linear regions, theoretical results
indicate that, for an $n$-dimensional polytope $P$ and network $N$ with $m$
nodes, we expect $\abs{LinRegions(N, P)} =
O(m^n)$~\citep{regions2019icml,regions2019neurips}.
\citet{syrenn} show efficient computation of one- and two-dimensional
$LinRegions$ for real-world networks.

\section{Experimental Evaluation}
\label{sec:ExperimentalEvaluation}

In this section, we study the efficacy and efficiency of Provable Repair~(PR) on
three different tasks. The experiments were designed to answer the following
questions:
\begin{itemize}
    \item[\RQ{1}] How \emph{effective} is PR in finding
        weights that satisfy the repair specification?
    \item[\RQ{2}] How much does PR cause performance \emph{drawdown}, on
        regions of the input space not repaired?
    \item[\RQ{3}] How well do repairs \emph{generalize} to enforce analogous
        specifications on the input space not directly repaired?
    \item[\RQ{4}] How \emph{efficient} is PR on different
        networks and dimensionalities, and where is most of the time spent?
\end{itemize}

\subsubsubsection{Terms used.}
\emph{Buggy network} is the DNN before repair, while the \emph{fixed} or
\emph{repaired network} is the DNN after repair.
\emph{Repair layer} is the layer of the network that we applied provable repair
to.
\emph{Repair set} is the pointwise repair specification or the
polytope repair specification used to synthesize the repaired network.
\emph{Generalization set} is a set of points (or polytopes) that are disjoint
from but simultaneously similar to the repair specification.
\emph{Drawdown set} is a set of points (or polytopes) that are disjoint from
and not similar to the repair specification.
\emph{Efficacy} is the percent of the repair set that is classified correctly
by the repaired network, i.e., the accuracy of the repaired network on the repair set. Our theoretical guarantees ensure that Provable Repair efficacy is always
100\%.
\emph{Generalization Efficacy} is computed by subtracting the accuracy on the
generalization set of the buggy network from that of the repaired network.
Higher generalization efficacy implies better generalization of the fix.
\emph{Drawdown} is computed by subtracting the accuracy on the drawdown set of
the repaired network from that of the buggy network.  Lower drawdown is
better, implying less forgetting.

\subsubsubsection{Fine-Tuning Baselines.}
We compare Provable Repair (PR) to two baselines.
The first baseline performs fine-tuning (FT) using
gradient descent on all parameters at once, as proposed
by~\cite{sinitsin2020editable}. FT runs gradient descent until all
repair set points are correctly classified.

The second baseline, modified fine-tuning (MFT), is the same as FT except
(a)~MFT fine-tunes only a single layer, (b)~MFT adds a loss term penalizing the
$\ell_0$ and $\ell_{\infty}$ norms of the repair, (c)~MFT reserves $25\%$ of the
repair set as a holdout set, and (d)~it stops once the accuracy on the holdout
set begins to drop. Note that this approach does not achieve full efficacy;
hence, it is not a valid repair algorithm (it does not repair the DNN). However,
because of the early-stopping, MFT should have lower drawdown.

In all cases PR, FT, and MFT were given the same
repair set (which included a number of non-buggy points). However, for polytope
repair it is necessary to sample from that infinite repair set to form a finite repair
set for FT and MFT, using the same number of randomly-sampled points as key
points in the PR algorithm.

\subsubsubsection{Evaluation Platform.}
All experiments were run on an Intel\textsuperscript{\textregistered}
Xeon\textsuperscript{\textregistered} Silver 4216 CPU @~2.10GHz.
BenchExec~\citep{benchexec} was used to ensure reproducibility and limit the
experiment to 32~cores and 300~GB of memory. The PyTorch framework was used for
performing linear algebra
computations~\citep{DBLP:conf/nips/PaszkeGMLBCKLGA19}.  The experiments were
run entirely on CPU. We believe that performance can be improved (i)~by
utilizing GPUs and (ii)~by using
TensorFlow~\citep{DBLP:conf/osdi/AbadiBCCDDDGIIK16}, which has explicit support
for Jacobian computations. We used Gurobi~\citep{gurobi} to solve the LP
problems. The code to reproduce our experimental results is available at
\url{https://github.com/95616ARG/PRDNN}.

\subsection{\task{1}: Pointwise ImageNet Repair}
\label{sec:EvaluationImageNet}

\subsubsubsection{Buggy network.}
SqueezeNet~\citep{SqueezeNet}, a modern ImageNet convolutional neural network.
We slightly modified the standard model~\citep{ONNXNetworks}, removing all
output nodes except for those of the 9 classes used (see below). The resulting
network has 18 layers, 727,626 parameters, and an accuracy of
93.6\% on these classes using the official ImageNet validation set.

\subsubsubsection{Repair set.} The Natural Adversarial Examples~(NAE)
dataset, which are images commonly misclassified by
modern ImageNet networks~\citep{hendrycks2019nae}. This dataset was also used
by~\citet{sinitsin2020editable}. For our nine classes (chosen alphabetically
from the 200 total in the NAE dataset), the NAE dataset contains 752 color
images. The buggy network has an accuracy of 18.6\% on these NAE images.  To
measure scalability of repair, we ran 4 separate experiments, using subsets of
100, 200, 400, and all 752 NAE images as the repair specification.

\subsubsubsection{Repair layer.}
PR and MFT was used to repair each of the 10 
feed-forward fully-connected or convolution layers. 
\pref{tab:ImageNet} only lists the PR and MFT results
for the layer with the best drawdown~(BD).

\subsubsubsection{Generalization set.}
The NAE images do not have a common feature that we would like the network to
generalize from the repair. Thus, we were not able to construct a
generalization set to evaluate generalization for \task{1}.

\subsubsubsection{Drawdown set.} The entire set of approximately 500 validation
images for the nine selected classes from the official ImageNet validation
set~\citep{imagenet_cvpr09}.

\subsubsubsection{Fine-tuning hyperparameters.}
Both FT and MFT use standard SGD with a learning rate of $0.0001$ and no
momentum, which were chosen as the best parameters after a small manual search.
FT[1] and MFT[1] use batch size 2, while FT[2] and MFT[2] use batch size 16.

\begin{table*}[t]
    \small
    \caption{Summary of experimental results for \task{1}. D: Drawdown (\%), T:
    Time, BD: Best Drawdown, PR: Provable Repair, FT: Fine-Tuning baseline,
    MFT: Modified Fine-Tuning baseline (best layer), E: Efficacy (\%). Efficacy
    of PR and FT is always 100\%, hence Efficacy (E) numbers are only provided
    for MFT.
    }
    \begin{tabular}{@{}l@{\hskip .3cm}cr@{\hskip .5cm}cr@{\hskip .5cm}cr@{\hskip .5cm}ccr@{\hskip .5cm}ccr}
        & \multicolumn{2}{c}{PR (BD)}
        & \multicolumn{2}{c}{FT[1]}
        & \multicolumn{2}{c}{FT[2]}
        & \multicolumn{3}{c}{MFT[1] (BD)}
        & \multicolumn{3}{c}{MFT[2] (BD)}
        \\
        \cmidrule(r){2-3} \cmidrule(r){4-5} \cmidrule(r){6-7} \cmidrule(r){8-10} \cmidrule(r){11-13}
        Points & D & T & D & T & D & T & E & D & T & E & D & T \\
        \midrule
        100
        & 3.6 & 1m39.0s
        & 10.2 & 4m31.8s
        & 8.2 & 9m24.0s
        & 24 & -0.7 & 19.2s
        & 28 & 0.0 & 4m4.9s
        \\
        200
        & 1.1 & 2m50.8s
        & 9.6 & 12m19.5s
        & 9.6 & 26m35.0s
        & 21.5 & -0.4 & 13.4s
        & 20 & -0.4 & 2m54.5s
        \\
        400
        & 5.1 & 4m45.3s
        & 13.8 & 34m2.6s
        & 11.1 & 1h9m26.8s
        & 21.25 & -0.4 & 29.3s
        & 21 & -0.4 & 1m32.1s
        \\
        752
        & 5.3 & 8m28.1s
        & 15.4 & 1h22m18.7s
        & 13.4 & 2h33m8.2s
        & 19.4 & -0.4 & 2m35.8s
        & 18.2 & -0.4 & 1m46.7s
    \end{tabular}
    \label{tab:ImageNet}
    \vspace{2mm}
\end{table*}

\subsubsubsection{\RQ{1}: Efficacy.}
When using 100, 200, and 400 points, our Provable Repair algorithm was able to
find a satisfying repair for any layer, i.e., achieving 100\% efficacy. For the
752 point experiment, Provable Repair was able to find a satisfying repair
when run on 7 out of the 10 layers. It timed out on one of the layers and on
the other two was able to prove that no such repair exists. FT
was also able to find a 100\%-efficacy repair in all four cases.

Meanwhile, the MFT baseline had efficacy of at most $28\%$, meaning it only
marginally improved the network accuracy on NAE points from the original
accuracy of $18\%$.

\subsubsubsection{\RQ{2}: Drawdown.}
\pref{tab:ImageNet} summarizes the drawdown of Provably Repaired networks on
this task.  In all cases, PR was able to find a layer whose repair resulted in
under $6\%$ drawdown. Extended results are in the
\onlyfor{arxiv}{appendix~\pref{tab:ImageNet2}}{extended version of this paper}. Per-layer drawdown is shown
in~\pref{fig:ImageNetDrawdown}, where we see that for this task repairing
earlier layers can lead to much higher drawdown while latter layers result in
consistently lower drawdown. This suggests a heuristic for repairing ImageNet
networks, namely focusing on latter layers in the network.

By contrast, FT had consistently worse drawdown
across multiple hyperparameter configurations, always above $8\%$ and in some
cases above $15\%$. This highlights how the guarantee of finding a minimal fix
using Provable Repair can lead to significantly more localized fixes,
preventing the DNN from forgetting what it had learned previously as is often a
major risk when fine-tuning.

The MFT baseline had very low drawdown, but this comes at the cost of low
efficacy.

\begin{figure}[b]
    \hspace{-5mm}
    \begin{subfigure}[t]{0.45\linewidth}
        \centering
        \begin{tikzpicture}
            \begin{axis}
            [scale=0.4, xlabel=Repaired Layer,ylabel=Drawdown,ymin=-2,ymax=70,
             y label style={at={(axis description cs:0.2,.5)}},
             x tick label style={font=\tiny},
             symbolic x coords={1,2,4,6,8,10,12,14,16,18},
             xtick={1,2,4,6,8,10,12,14,16,18}]
                \addplot table [x=layer, y=drawdown, col sep=comma]
                {data/imagenet_patching/400_points.csv};
            \end{axis}
        \end{tikzpicture}
        \caption{}
        \label{fig:ImageNetDrawdown}
    \end{subfigure}
    \hspace{1mm}
    \begin{subfigure}[t]{0.45\linewidth}
        \centering
        \begin{tikzpicture}
            \begin{axis}
            [ybar stacked, scale=0.4, xlabel=Repaired Layer, ylabel=Time (s),
                bar width=5, x tick label style={font=\tiny},
             symbolic x coords={1,2,4,6,8,10,12,14,16,18},
             xtick={1,2,4,6,8,10,12,14,16,18}]
                \addplot table [x=layer, y=jacobian, col sep=comma]
                {data/imagenet_patching/400_points.csv};
                \addplot table [x=layer, y=solver, col sep=comma]
                {data/imagenet_patching/400_points.csv};
                \addplot table [x=layer,
                y expr=\thisrow{total} - \thisrow{solver} - \thisrow{jacobian},
                col sep=comma]
                {data/imagenet_patching/400_points.csv};
            \end{axis}
        \end{tikzpicture}
        \caption{}
        \label{fig:ImageNetTime}
    \end{subfigure}
    \caption{(a)~Drawdown and (b)~timing per repair layer when using 400 images
    in the repair set for \task{1}. Blue: Jacobian, Red: Gurobi, Brown: Other.}
\end{figure}
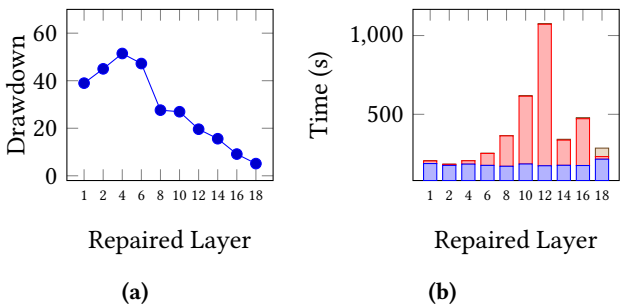

\subsubsubsection{\RQ{4}: Efficiency.}
\pref{tab:ImageNet} also shows the amount of time taken to repair different
numbers of points. Even with all 752 points, PR was able to find the repair
with the best drawdown in under 10 minutes. If the layers are repaired in
parallel, then all single-layer repairs can be completed in 4m, 8m, 18m, and
1h26m for 100, 200, 400, and 752 points respectively. If the layers are
repaired in sequence, all single-layer repairs can be completed instead in 15m,
31m, 1h7m, and 4h10m respectively.
To understand where time was spent,~\pref{fig:ImageNetTime} plots
the time taken (vertical axis) against the layer fixed (horizontal axis) for
the 400-point experiments. We have further divided the time spent into (i) time
computing parameter Jacobians, (ii) time taken for the Gurobi
optimization procedure, and (iii) other. For this model we find
that a significant amount of time is spent computing Jacobians. This is because
PyTorch lacks an optimized method of computing such Jacobians, and so we
resorted to a sub-optimal serialized approach.

By comparison, FT can take significantly longer. For
example, on the 752-point experiment we saw FT take over 2 hours. In
practice, this is highly dependent on the hyperparameters chosen, and
hyperparameter optimization is a major bottleneck for FT
in practice.
The MFT baseline was also fast, but this comes at the cost of low efficacy.

\subsection{\task{2}: 1D Polytope MNIST Repair}
\label{sec:EvaluationMNIST}

\subsubsubsection{Buggy network.} The MNIST ReLU-3-100 DNN from~\citet{ERAN},
consisting of $3$ layers and $88,010$ parameters for classifying handwritten
digits. This network has an accuracy of 96.5\% on the MNIST test set.

\subsubsubsection{Repair set.}
We ran separate experiments with 10, 25, 50, or 100 lines; each line was
constructed by taking as one endpoint an uncorrupted MNIST handwritten digit
image and the other endpoint that same MNIST image corrupted with fog from
MNIST-C~\citep{DBLP:journals/corr/abs-1906-02337}. If $I$ is the uncorrupted
image and $I'$ is the fog-corrupted image, then this specification states that
all (infinite) points along the line from $I$ to $I'$ must have the same
classification as $I$. The buggy network has an accuracy of 20.0\% on the
corrupted endpoints used in the line repair specification.

Note that, unlike Provable Polytope Repair, both FT and MFT are given
finitely-many points sampled from these lines --- they can\emph{not} make any
guarantees about other points on these lines that are not in its sampled set.

\subsubsubsection{Repair layer.}
We ran two repair experiments, repairing each of the last two layers.  Because
the network is fully-connected, the first layer has a very large number of
nodes (because it is reading directly from the 784-dimensional input image),
leading to a very large number of variables in the constraints. By comparison,
SqueezeNet used convolutional layers, so the size of the input does not matter.

\subsubsubsection{Generalization set.}
The MNIST-C fog test set consisting of $10,000$ fog-corrupted MNIST images as
the generalization set.  The accuracy of the buggy network is 19.5\% on this
generalization set.

\subsubsubsection{Drawdown set.}
The drawdown set is the official MNIST test set, which contains $10,000$
(uncorrupted) MNIST images. The images in this test set are the
exactly the uncorrupted versions of those in the generalization set. The buggy
network has 96.5\% accuracy on the drawdown set.

\subsubsubsection{Fine-tuning hyperparameters.}
Both use standard SGD with a batch size of $16$ and momentum $0.9$, chosen as
the best parameters after a small manual search. FT[1] and MFT[1] use a learning rate of
$0.05$ while FT[2] and MFT[2] use a learning rate of $0.01$.

\begin{table*}
    \small
    \caption{Summary of experimental results for \task{2}. D: Drawdown (\%), G:
    Generalization (\%), T: Time, PR: provable repair, FT: fine-tuning
    baseline. $\ast$ means fine-tuning diverged and timed out after 1000
    epochs, the results shown are from the last iteration of fine-tuning before
    the timeout.}
    \begin{tabular}{@{}lc@{\hskip .5cm}ccr@{\hskip .5cm}ccr@{\hskip .5cm}ccr@{\hskip .5cm}ccr}
        &
        & \multicolumn{3}{c}{PR (Layer 2)}
        & \multicolumn{3}{c}{PR (Layer 3)}
        & \multicolumn{3}{c}{FT[1]}
        & \multicolumn{3}{c}{FT[2]}
        \\
        \cmidrule(r){3-5} \cmidrule(r){6-8} \cmidrule(r){9-11} \cmidrule(r){12-14}
        Lines & Points
        & D & G & T
        & D & G & T
        & D & G & T
        & D & G & T
        \\
        \midrule
        10 & 1730
        & 1.3 & 30.7 & 1m55.1s
        & 5.7 & 32.1 & 1.7s
        & 56.0 & 4.2 & 0.4s
        & 8.3 & 27.5 & 0.6s
        \\
        25 & 4314
        & 1.8 & 35.5 & 2m46.5s
        & 5.5 & 38.3 & 3.7s
        & 36.5 & 22.4 & 1.2s
        & 3.8 & 51.0 & 0.4s
        \\
        50 & 8354
        & 2.6 & 38.3 & 4m29.3s
        & 5.9 & 44.5 & 8.0s
        & 85.2$^\ast$ & -8.2$^\ast$ & 29m36.5s$^\ast$
        & 4.7 & 55.8 & 0.8s
        \\
        100 & 16024
        & 2.4 & 42.9 & 10m55.7s
        & 5.9 & 46.0 & 18.4s
        & 31.4 & 37.7 & 3.1s
        & 3.2 & 60.0 & 1.6s
    \end{tabular}
    \label{tab:MNIST}
    \vspace{7mm}
\end{table*}

\begin{table*}
    \small
    \caption{Summary of modified fine-tuning results for \task{2}.
    E: Efficacy (\%), D: Drawdown (\%), G: Generalization (\%), T: Time, MFT:
    modified fine-tuning baseline. Note that the modified fine-tuning
    \emph{does not} satisfy all of the hard constraints; therefore, it is not
    in fact repairing the network. However, it does result in lower drawdown.}
    \begin{tabular}{@{}lc@{\hskip .5cm}cccr@{\hskip .5cm}cccr@{\hskip .5cm}cccr@{\hskip .5cm}cccr}
        & \multicolumn{4}{c}{MFT[1] (Layer 2)}
        & \multicolumn{4}{c}{MFT[1] (Layer 3)}
        & \multicolumn{4}{c}{MFT[2] (Layer 2)}
        & \multicolumn{4}{c}{MFT[2] (Layer 3)}
        \\
        \cmidrule(r){2-5} \cmidrule(r){6-9} \cmidrule(r){10-13} \cmidrule(r){14-17}
        Lines
        & E & D & G & T
        & E & D & G & T
        & E & D & G & T
        & E & D & G & T
        \\
        \midrule
        10
        & 66.5 & 1.9 & 14.3 & 0.7s
        & 60.7 & 0.1 & 3.6 & 0.5s
        & 70.3 & 0.5 & 16.8 & 0.4s
        & 58.4 & -0.05 & 1.3 & 0.5s
        \\
        25
        & 67.3 & 0.6 & 16.4 & 1.0s
        & 57.4 & 0.3 & 2.4 & 38.3s
        & 65.8 & 0.6 & 16.9 & 1.0s
        & 56.1 & 0.03 & 1.0 & 1.0s
        \\
        50
        & 71.3 & 0.6 & 17.9 & 1.7s
        & 61.5 & 0.1 & 1.7 & 1.6s
        & 70.5 & 0.7 & 17.5 & 1.1s
        & 59.7 & 0.1 & 0.8 & 1.6s
        \\
        100
        & 69.7 & 0.6 & 11.9 & 2.2s
        & 63.7 & 0.1 & 2.3 & 2.2s
        & 69.8 & 0.4 & 12.9 & 3.3s
        & 62.7 & 0.05 & 0.5 & 5.2s
    \end{tabular}
    \label{tab:MNIST_MFT}
    \vspace{7mm}
\end{table*}

\pref{tab:MNIST}~and~\pref{tab:MNIST_MFT} summarize the results for \task{2} when repairing Layer~2 and
Layer~3. The ``Lines'' column lists the number of lines in the repair
specification. The ``Points'' column lists the number of key points in the
$LinRegions$, i.e., the size of the constructed $X'$
in~\pref{alg:polytopewise}.

\subsubsubsection{\RQ{1}: Efficacy.}
PR always found a repaired network, i.e.,
one guaranteed to correctly classify all of the infinitely-many points on each of the
lines used in the repair specification.

FT
could usually find a repaired network that achieved 100\% accuracy on its
sampled repair points, but could not make any guarantee about the
infinitely-many other points in the line specification. Furthermore, in one
configuration FT timed out after getting stuck in a very
bad local minima, resulting in a network with near-chance accuracy. This
highlights how extremely sensitive FT is to the choice of
hyperparameters, a major inconvenience when attempting to apply the technique
in practice when compared to our hyperparameter-free LP formulation.

Meanwhile, MFT was only able to achieve at most 71.3\% efficacy, and like FT
this does not ensure anything about the infinitely-many other points in the
specification.

\subsubsubsection{\RQ{2}: Drawdown.}
Provable Repair results in low drawdown on this task.  Repairing
Layer~2 consistently results in less drawdown, with a drawdown of 2.4\% when
repairing using all 100~lines.  However, even when repairing Layer 3 the
drawdown is quite low, always under 6\%. FT has
significantly worse drawdown, up to 56.0\% even when FT
terminates successfully. This highlights how the Provable Repair guarantee of
finding the \emph{minimal} repair significantly minimizes forgetting. Here
again, FT is extremely sensitive to hyperparamaters, with a different
hyperparameter choice often leading to order of magnitude improvements in
drawdown.

MFT achieved low drawdown, but at the cost of worse generalization and
efficacy.

\subsubsubsection{\RQ{3}: Generalization.} Provable Repair results in
significant generalization, improving classification accuracy of fog-corrupted
images not part of the repair set,  regardless of the layer repaired.  For
instance, repairing Layer 3 using 100 lines resulted in generalization of 46\%;
that is, the accuracy improved from 19.5\% for the buggy network to 65.5\% for
the fixed network. In some scenarios FT has slightly
better generalization, however this comes at the cost of higher drawdown.
Furthermore, Provable Repair tends to have better generalization when using
fewer lines (i.e., smaller repair set), which highlights how FT has a
tendency to overfit small training sets. We also note that FT
again shows extreme variability (2--10$\times$) in generalization
performance between different hyperparameter choices, \emph{even when} it
successfully terminates with a repaired network.
MFT had consistently worse generalization than PR, sometimes by multiple orders
of magnitude.

\subsubsubsection{\RQ{4}: Efficiency.}
In addition to the time results in~\pref{tab:MNIST}, we did a deeper analysis
of the time taken by various parts of the repair process for the 100-line
experiment. For Layer 2, repairing completed in $655.7$ seconds, with $1.0$ seconds
taken to compute $LinRegions$, $8.0$ seconds for computing Jacobians, $623.6$
seconds in the LP solver, and $23.1$ seconds in other tasks. For Layer 3,
repairing completed in $18.4$ seconds, with $1.0$ seconds computing $LinRegions$,
$1.0$ seconds computing Jacobians, $12.9$ seconds in the LP solver, and $3.5$
seconds in other tasks.
We find that repairing the lines was quite efficient, with the majority of the
time taken by the Gurobi LP solver. In contrast, for \task{1} the majority of
time was spent computing Jacobians. This is because we implemented an optimized
Jacobian computation for feed-forward networks.

We see that the time taken to repair depends on the particular layer that is
being repaired. There is little overhead from our constraint encoding process
or computing LinRegions. Because the majority of the time is spent in the
Gurobi LP solver, our algorithm will benefit greatly from the active and ongoing
research and engineering efforts in producing significantly faster LP solvers.

FT was also fast. However, again we note that for
some choices of hyperparameters FT gets stuck and cannot
find a repaired network with better-than-chance accuracy. This highlights how
sensitive such gradient-descent-based approaches are to their hyperparameters,
in contrast to our LP-based formulation that is guaranteed to find the minimal
repair, or prove that none exists, in polynomial time.
Finally, MFT was consistently fast but at the cost of consistently worse
efficacy and generalization.

\subsection{\task{3}: 2D Polytope ACAS Xu Repair}
\label{sec:EvaluationACAS}

\subsubsubsection{Buggy network:}
\task{3} uses the $N_{2,9}$ ACAS Xu network~\citep{julian2018deep}, which has 7 layers and
13,350 parameters.
The network takes a five-dimensional
representation of the scenario around the aircraft, and outputs
one of five possible advisories.

\subsubsubsection{Repair set.}
\citet{reluplex:CAV2017} show that $N_{2,9}$ violates
the safety property $\phi_8$. However,
we cannot directly use $\phi_8$ as a polytope repair specification
because (i)~$\phi_8$ concerns a five-dimensional polytope, and
existing techniques for computing $LinRegions$ only scale to two dimensions on
ACAS-sized neural networks, and (ii)~$\phi_8$
specifies that the output advisory can be one of two possibilities,
a disjunction that cannot be encoded as an LP. To circumvent reason~(i), \task{3}
uses 10 randomly-selected two-dimensional planes (slices) that contain
violations to property $\phi_8$. To circumvent reason~(ii), \task{3} strengths
$\phi_8$ based on the existing behavior of the network. For each key
point in the two-dimensional slice, we compute which of the two possibilities was higher in
the buggy network $N_{2,9}$, and use the higher one as the desired output
advisory. Notably, any network that satisfies this strengthened property also
satisfies property $\phi_8$.

\subsubsubsection{Repair layer.}
We used the last layer as the repair layer. The other layers were
unsatisfiable, i.e.,~\pref{alg:polytopewise} returned $\bot$.

\subsubsubsection{Generalization set.}
$5,466$ counterexamples to the safety property $\phi_8$ that were not in the
repair set. These counterexamples were found by computing $LinRegions$ on 12
two-dimensional slices randomly selected from $R$.

\subsubsubsection{Drawdown set.} A similarly randomly-sampled set of $5,466$
points that were correctly classified by the buggy network.
Generalization and drawdown sets have the same size.

\subsubsubsection{Fine-tuning hyperparameters.}
Both FT and MFT use standard SGD with learning rate of $0.001$, momentum $0.9$, and batch size
$16$ chosen as best from a small manual search.

\subsubsubsection{\RQ{1}: Efficacy.}
Provable Polytope Repair was able to provably repair all 10 two-dimensional
slices in the repair set, i.e., synthesize a repaired network that
satisfies safety property $\phi_8$ on all infinitely-many points on the 10
2D repair slices.

By contrast, both FT and MFT had \emph{negative} efficacy; 
viz., while the original network misclassified only 3~points 
in the sampled repair set, the FT-repaired network misclassified 
181 points and the MFT-repaired networks misclassified
between 10 and 50 points. 

\subsubsubsection{\RQ{2}: Drawdown.}
The drawdown for Provable Repair was zero: the fixed network correctly
classified all $5,466$ points in the drawdown set.
By contrast, FT led to $650$ of the $5,466$ points that
were originally classified correctly to now be classified incorrectly. This
highlights again how fine-tuning can often cause forgetting.
For all layers, MFT had a drawdown of less than~$1\%$.

\subsubsubsection{\RQ{3}: Generalization.}
$5,176$ out of $5,466$ points in the generalization set were correctly
classified in the Provably Repaired network; only $290$ were incorrectly
classified. Recall that all $5,466$ were incorrectly classified in the buggy
network. Thus, the generalization is $94.69\%$.

FT left only $216$ points incorrectly classified,
resulting in a slightly better generalization of $95.8\%$. However, this comes
at the cost of introducing new bugs into the network behavior (see Drawdown
above) and failing to achieve 100\% efficacy even on the finitely-many repair
points it was given.
MFT had a generalization of~$100\%$ for the last two layers, and 
a generalization of less than
$10\%$ for the remaining layers.

\subsubsubsection{\RQ{4}: Efficiency.}
It took a total of $21.2$~secs.~to Provably Repair the network using the 10
two-dimensional slices; computing $LinRegions$ took 1.5~secs.; 1.8~secs.~to
compute Jacobians; 7.0~secs.~for the Gurobi LP solver; and 10.9~secs.~for other
tasks.

FT never fully completed; we timed it out after 1000
epochs taking 1h18m9.9s. This highlights the importance of our theoretical
guarantees that Provable Repair will either find the minimal fix or prove that
no fix exists in polynomial time.
MFT completed within 3 seconds for all layers.

\section{Related Work}
\label{sec:RelatedWork}

Closest to this paper is
\citet{DBLP:conf/lpar/GoldbergerKAK20}, which can be viewed as finding minimal
layer-wise fixes for a DNN given a pointwise specification. However, their
algorithm is exponential time and their underlying formulation
is NP-Complete.  By contrast, DDNNs allow us to reduce this repair problem to an LP.  
Furthermore, \citep{DBLP:conf/lpar/GoldbergerKAK20} only
addresses pointwise repair, and not provable
\emph{polytope} repair. 
These issues are demonstrated
in the experimental results; whereas~\citep{DBLP:conf/lpar/GoldbergerKAK20} is
able to repair only 3 points even after running for a few days, we repair
entire polytopes (infinitely many points, reduced to over 150,000 key points)
in under thirty seconds for the same ACAS Xu DNN.
Furthermore,
reliance on Marabou~\citep{DBLP:conf/cav/KatzHIJLLSTWZDK19} means
\citep{DBLP:conf/lpar/GoldbergerKAK20} is restricted to PWL activation
functions. Our provable pointwise repair algorithm is applicable to DNNs using
non-PWL activations such as Tanh and Sigmoid.

\citet{DBLP:conf/ijcnn/KauschkeLF19} focuses on image-recognition models under
distributional shift, but polytope repair is not considered. The technique
learns a predictor that estimates whether the original network will misclassify
an instance, and a repaired network that fixes the misclassification. 

\citet{sinitsin2020editable} proposed \emph{editable neural networks}, which
train the DNN to be easier to manipulate post-training. Unlike our approach,
their technique does not provide provable guarantees of efficacy or minimality,
and does not support polytope repair. When the original training dataset is
unavailable, their approach reduces to the fine-tuning technique we used as a
baseline.
Similarly,~\citet{DBLP:conf/iclr/TramerKPGBM18} injects adversarial examples
into training data to increase robustness.

Robust training techniques~\cite{fischer2019dl2} apply gradient descent to an
abstract interpretation that computes an over-approximation of the model's
output set on some polytope input region. Such approaches have the same
sensitivity to hyperparameters as retraining and fine-tuning techniques.
Furthermore, they use coarse approximations designed for
pointwise robustness. These coarse approximations blow up on the larger input
regions considered in our experiments, making the approach ineffective for
repairing such properties.

GaLU networks~\citep{galu} can be thought of as a variant of decoupled networks
where activations and values get re-coupled after every layer. Thus, multi-layer
GaLU networks do not satisfy the key theoretical properties of DDNNs.

\citet{DBLP:conf/aaai/AlshiekhBEKNT18,vrl} ensure safety of reinforcement
learning controllers by synthesizing a \emph{shield} guaranteeing it satisfies a
temporal logic property. \citet{DBLP:conf/nips/BastaniPS18} present policy
extraction for synthesizing provably robust decision tree policy for deep
reinforcement learning.

Prior work on DNN verification focused on safety
and
robustness~\citep{DBLP:conf/cav/KatzHIJLLSTWZDK19,reluplex:CAV2017,Huang:CAV2017,Ehlers:ATVA2017,Bunel:NIPS2018,Bastani:NIPS2016,ai2:SP2018,Singh:POPL2019,Anderson:PLDI2019}.
More recent research tackles testing of DNNs
\citep{pei2017deepxplore,tian2018deeptest,sun2018concolic,odena2019tensorfuzz,DBLP:conf/issre/MaZSXLJXLLZW18,DBLP:conf/issta/XieMJXCLZLYS19,DBLP:conf/issre/GopinathZWKPK19,DBLP:conf/kbse/MaJZSXLCSLLZW18}.
Our algorithms can fix errors found by such tools.

\section{Conclusion and Future Work}
\label{sec:Conclusion}

We introduced \emph{provable repair} of DNNs, and presented algorithms
for \emph{pointwise} and \emph{polytope} repair; the former handles specifications
on finitely-many inputs, and the latter handles a symbolic specification about
infinite sets of points. We introduced \emph{Decoupled} DNNs,
which allowed us to reduce provable pointwise repair to an LP problem.
For the common class of piecewise-linear DNNs, our polytope
repair algorithm can \emph{provably} reduce the polytope repair problem to a
pointwise repair problem.
Our extensive experimental evaluation on three different tasks
demonstrate that pointwise and
polytope repair are effective, generalize well, display minimal drawdown, and
scale well.

The introduction of provable repairs opens many exciting directions for future
work.
We can employ \emph{sound approximations} of linearizations to improve performance, and
support non-piecewise-linear activation functions for polytope repair.
Repairing multiple layers could be achieved
by using the natural generalization of our LP formulation to a QCQP
\cite{baron1972quadratic}, or by iteratively applying our LP formulation to
different layers.
Future work may repair the
activation parameters, or convert the resulting DDNN back into a standard,
feed-forward DNN while still satisfying the specification (e.g., to reduce the
small computational overhead of the DDNN).
Exploring learning-theoretic properties of the repair process, the trade-off between generalization and
drawdown during repair,
and heuristics for choosing repair layers, are all very interesting and important lines
of future research to make provable repair even more useful.
Future work could explore repairing Recurrent Neural Networks (RNNs)
using linear temporal logic specifications.
Future work may explore how to speed up repair using hardware accelerators
beyond the native support provided by PyTorch.
Finally, experimenting with different objectives, relaxations, or solving
methods may lead to even more efficient mechanisms for DNN repair.

\begin{acks}
  We thank our shepherd Osbert Bastani and the other reviewers for their
  feedback and suggestions. This work is supported in part by NSF grant
  CCF-2048123 and a Facebook Probability and Programming research award.
\end{acks}

\balance
\bibliography{main}

\onlyfor{arxiv}{
\clearpage
\appendix
\section{Linear Approximations of Vector Functions}
\label{app:Calculus}
This section discusses linear approximations of vector functions. The key
definition is \pref{def:Jacobian}, which defines the Jacobian. An example is
given in~\pref{sec:JacobianExample}. Readers comfortable
with~\pref{def:Jacobian} and~\pref{sec:JacobianExample}, i.e., taking the
Jacobian of a vector-valued function with respect to a matrix parameter, are
welcome to skip this section.

In this work, we will be using the theory of linear approximations in two major
ways.  First, they will be used to define Decoupled DNNs (\pref{sec:DDNNs})
where we take the activation function for a value channel layer to be a linear
approximation of the activation function for the activation channel. Second,
they will be used to investigate how a DDNN's output varies as the weights of a
particular layer in the value channel are varied. Our key result
(\pref{thm:LinearDDNN}) implies that, when modifying a single layer, the linear
approximation is \emph{exact}, allowing us to encode the repair problem as an
LP.

Our notation is closest to~\citet{loomis1968advanced}, although the definitions
are entirely equivalent to those often used in undergraduate-level calculus and
real analysis courses. We assume an intuitive familiarity with the concept of a
limit.

\subsection{The Scalar Case}
Suppose we have some scalar-valued function $f : \mathbb{R} \to \mathbb{R}$. A
central question of calculus is whether $f$ can be locally well-approximated by
a linear function around a point $x_0$. In other words, we want to find some
slope $s$ such that
$
    f(x_0 + \Delta) \approx f(x_0) + s\Delta
$
when $\Delta$ is close to zero. Note that the right-hand side is a linear
function of $\Delta$. This is not yet a particularly rigorous statement of what
properties we would like this linear approximation to have.  This is due to the
imprecise notion of what we mean by $\approx$ and `close to zero.' To make our
notion of approximation more precise, we will use the \emph{little-oh
notation,}~\citep{graham1989concrete} as defined below.  Intuitively, we say
that $f(x) = o(g(x))$ as $x \to x_0$ if $g(x)$ is ``meaningfully larger'' than
$f(x)$ close to $x_0$.

\begin{definition}
    \label{def:littleoh}
    For functions $f$, $g$ we say \emph{$f(x) = o(g(x))$ as $x$ approaches
    $x_0$} if
    $
        \lim_{x\to x_0} \frac{f(x)}{g(x)} = 0.
    $
\end{definition}

We will now rephrase our approximation requirement as:
$
    f(x_0 + \Delta) = f(x_0) + s\Delta + o(\Delta)
$
as $\Delta$ approaches 0. In particular, this notation means that
$
    f(x_0 + \Delta) - (f(x_0) + s\Delta) = o(\Delta).
$
Intuitively, the left hand side of this equation is an error term, i.e., how
far away our linear approximation is from the true value of the function. If
our linear approximation is reasonable, this error term should become
arbitrarily small as we get closer to $x_0$.

Notably, we do not just want the error term to approach zero, because then for
any continuous function we could simply use the horizontal line ($s = 0$) as
the linear approximation. For example, we generally do not think of $f(x) =
\abs{x}$ as being well-approximated by any line at $x = 0$. Hence, we need to
place some additional requirement on how fast the error decreases as $\Delta$
approaches zero. Because we are attempting to approximate the function with a
linear function, we expect the error to decrease faster than any other
linear-order function. This is captured by the $o(\Delta)$ on the right-hand
side.  The $o(\Delta)$ bound can also be motivated by noting that, if a
function is well-approximated around some point by a line, then that line
should be unique. In other words, if there were some other $t$ for
which
$
    f(x_0 + \Delta) \approx f(x_0) + t\Delta,
$
then we would like to ensure that $t = s$. Ensuring the error term is
$o(\Delta)$ ensures this fact, as subtracting the two gives us
$(s - t)\Delta = o(\Delta)$, i.e., $\frac{(s - t)\Delta}{\Delta} \to 0$, which
implies $s = t$. Duly motivated, and noting that this $s$ if it exists is
unique, we can formally define the scalar derivative:

\begin{definition}
    Suppose $f : \mathbb{R} \to \mathbb{R}$ and $x_0 \in \mathbb{R}$. We say
    that $f$ is \emph{differentiable at $x_0$} if there exists $s \in
    \mathbb{R}$ such that
    $
        f(x_0 + \Delta) = f(x_0) + s\Delta + o(\Delta)
    $
    as $\Delta$ approaches zero. In that case, we say $s$ is the
    \emph{derivative of $f$ at $x_0$} and write $f'(x_0) = s$ or
    $\frac{df}{dx}(x_0) = s$.
\end{definition}

In fact, expanding the definition of $o(\Delta)$ using~\pref{def:littleoh}, we
find that this is equivalent to the statement
$
    \lim_{\Delta\to 0} \frac{f(x_0 + \Delta) - (f(x_0) + s\Delta)}{\Delta} = 0
$, or equivalently
$\lim_{\Delta\to 0} \frac{f(x_0 + \Delta) - f(x_0)}{\Delta}$ $= s$,
the familiar calculus definition of a derivative.

\begin{example}
    Consider the function $f(x) = x^2$, which we will show is differentiable at
    any point $x_0$ with derivative $2x_0$. In particular, we must show
    $
        (x_0 + \Delta)^2 = x_0^2 + (2x_0)\Delta + o(\Delta).
    $
    Expanding the left-hand side gives us
    $
        x_0^2 + 2x_0\Delta + \Delta^2 = x_0^2 + 2x_0\Delta + o(\Delta).
    $
    Two of the terms on either side are entirely identical, hence this holds if
    and only if
    $
        \Delta^2 = o(\Delta),
    $
    and indeed we have
    $
        \lim_{\Delta \to 0} \frac{\Delta^2}{\Delta} = \lim_{\Delta \to 0} \Delta = 0.
    $
    In fact, more generally for any $c > 1$ we have $\Delta^c = o(\Delta)$ as $\Delta \to 0$.
    Thus, we have shown that $f$ is differentiable at every $x_0$ with
    derivative $2x_0$, as expected from calculus.
\end{example}

\subsection{The Vector Case}
We now assume $f$ is higher dimensional, i.e., $f : \mathbb{R}^n \to
\mathbb{R}^m$, and we ask the same question: is $f$ well-approximated by a
linear function around some $\vec{x_0} \in \mathbb{R}^n$? The
higher-dimensional analogue to a one-dimensional linear function is a
\emph{matrix multiplication} followed by a constant addition. In other words,
we would like to find some matrix $J$ such that
$
    f(\vec{x_0} + \vec{\Delta}) \approx f(\vec{x_0}) + J\vec{\Delta}
$
when $\vec{\Delta}$ is close to zero. Here again we run into two issues: what,
rigorously, does $\approx$ and close to zero mean? Using the insight from the
scalar case, we may attempt to rigorously define this as
$
    f(\vec{x_0} + \vec{\Delta}) = f(\vec{x_0}) + J\vec{\Delta} + o(\vec{\Delta})
$
as $\vec{\Delta}$ approaches zero. However, expanding the definition of
little-oh, we see this would require us to divide two vectors, which is not a
well-defined operation in general. Instead, we return to the intuition that we
want the error to get closer to zero significantly faster than $\vec{\Delta}$
does. Here, we need a notion of `close to zero' for vectors that can be
compared via division. This is given by the notion of a \emph{norm,} defined
below.

\begin{definition}
    A \emph{norm} on $\mathbb{R}^n$ is any function $\norm{\cdot} :
    \mathbb{R}^n \to \mathbb{R}$ that satisfies: (i) $\norm{\vec{v}} = 0$ only
    if $\vec{v} = 0$, (ii)  $\norm{c\vec{v}} = \abs{c}\norm{\vec{v}}$ for any
    $c \in \mathbb{R}$, and (iii) $\norm{\vec{v} + \vec{w}} \leq \norm{\vec{v}}
    + \norm{\vec{w}}$.
\end{definition}

Intuitively, $\norm{\vec{v}}$ gives us the distance of $\vec{v}$ from the
origin. Many different norms on $\mathbb{R}^n$ can be defined, however it can
be shown that they all result in equivalent notions of differentiability and
derivative. Two norms are particularly worth mentioning here (i) the $\ell_1$
norm defined $\norm{\vec{v}}_1 = \sum_{i=1}^n \abs{v_i}$, and (ii) the
$\ell_\infty$ norm defined $\norm{\vec{v}}_\infty = \max_i \abs{v_i}$. These
two norms are particularly useful to us because they can be encoded as
objectives in an LP.

Now that we have a notion of closeness to zero that can be divided, we can
formalize our notion of a differentiable function in higher dimensions:
\begin{definition}
    \label{def:Jacobian}
    Suppose $f : \mathbb{R}^n \to \mathbb{R}^m$ and $\vec{v_0} \in
    \mathbb{R}^n$. We say that $f$ is \emph{differentiable at $\vec{v_0}$} if
    there exists a matrix $J$ such that
    $
        f(\vec{v_0} + \vec{\Delta}) = f(\vec{v_0}) + J\vec{\Delta} + o(\norm{\vec{\Delta}})
    $
    as $\vec{\Delta}$ approaches zero, where $J\vec{\Delta}$ is a matrix-vector
    multiplication. In that case, we say $J$ is the \emph{Jacobian derivative
    (Jacobian) of $f$ at $\vec{v_0}$} and write $D_{\vec{v}} f(\vec{v_0}) = J$.
\end{definition}
Note in the above definition that the error term is a vector. Denoting it
$\vec{E}$, the limit corresponding to the little-oh definition is actually
asserting that
$
    \lim_{\vec{\Delta} \to \vec{0}} \norm{\vec{\Delta}}^{-1}\vec{E} = \vec{0},
$
i.e., approaching the zero \emph{vector}. It can be shown that this is
equivalent to the scalar-valued limit
$
    \lim_{\vec{\Delta} \to \vec{0}} \norm{\vec{\Delta}}^{-1}\norm{\vec{E}} = 0.
$

By a similar argument to the scalar case, it can be shown that this
$D_{\vec{v}} f(\vec{v_0})$, if it exists, is unique. In fact, it can also be
shown that the entries of $J$ are the \emph{partial derivatives} of each of the
$m$ scalar components of $f$ with respect to each of the $n$ scalar components
of $\vec{v}$. Jacobians can be computed automatically using standard automatic
differentiation packages like~\citet{paszke2017automatic}.

\subsection{A Vector Example}
\label{sec:JacobianExample}
Consider the DNN given by the function
$
    f(\vec{v}, W) = ReLU(W\vec{v})
$
where $W$ is a weight matrix with two rows and two columns. Note that we have
written our function $f$ as a function of \emph{both} $\vec{v}$ and $W$,
because, in this paper, we will be interested in how the function behaves as we
vary $W$. Now, fix
$
    \vec{v_0} \coloneqq \begin{bmatrix}1 \\ 2\end{bmatrix}
$
and
$
    W_0 \coloneqq
    \begin{bmatrix}
        1 & 2 \\
        3 & -4 \\
    \end{bmatrix}.
$

We want to know: is $f(\vec{v_0}, W)$, as a function of $W$, differentiable at
$W_0$? In fact, it is. To show this, we must produce some matrix $J$ that
forms the linear approximation
$
    f(\vec{v_0}, W_0 + \vec{\Delta}) = f(\vec{v_0}, W_0) + J\vec{\Delta} + o(\norm{\vec{\Delta}}).
$
The reader may now be concerned, as $\vec{\Delta}$ is a \emph{matrix}, not a vector.
However, the set of all $2\times 2$ (or generally, $n\times m$) matrices forms
a vector space $\mathbb{R}^{2\times 2}$ (generally $\mathbb{R}^{n\times m}$).
We think of $J$ as a matrix \emph{over the vector space of matrices}. This can
be understood using the fact that $\mathbb{R}^{2\times 2}$ is isomorphic
to $\mathbb{R}^4$ (generally $\mathbb{R}^{n\times m} \simeq \mathbb{R}^{nm}$).
Hence we can think of $W_0$ and $\vec{\Delta}$ as flattened versions of their
corresponding matrices, i.e.,
$
    \vec{W_0} = \begin{bmatrix}1& 2& 3& -4\end{bmatrix}^T
$
if we flatten in row-major order, and similarly for $\vec{\Delta}$. Then $J$ is a
standard matrix with 4 columns (one for every entry in $\vec{\Delta}$) and 2 rows
(one for every output of $f$).

Using the constructive characterization of the Jacobian alluded to above, we
can find that
${\footnotesize
    J \coloneqq
    \begin{bmatrix}
        1 & 2 & 0 & 0 \\
        0 & 0 & 0 & 0 \\
    \end{bmatrix}}
$
works. To verify this we consider
\[
    {\footnotesize
    \begin{aligned}
        ReLU\left(
        \begin{bmatrix}
            1 + \Delta_1 & 2 + \Delta_2 \\
            3 + \Delta_3 & -4 + \Delta_4 \\
        \end{bmatrix}
        \begin{bmatrix}1 \\ 2\end{bmatrix}
        \right)
        =
        &ReLU\left(
        \begin{bmatrix}
            1 & 2 \\
            3 & -4 \\
        \end{bmatrix}
        \begin{bmatrix}1 \\ 2\end{bmatrix}
        \right) \\
        &+
        \begin{bmatrix}
            1 & 2 & 0 & 0 \\
            0 & 0 & 0 & 0 \\
        \end{bmatrix}
        \begin{bmatrix} \Delta_1 \\ \Delta_2 \\ \Delta_3 \\ \Delta_4 \end{bmatrix}
            + o(\norm{\vec{\Delta}})
    \end{aligned}
        }
\]
which simplifies to
$
    ReLU\left(
    \begin{bmatrix}
        5 + \Delta_1 + 2\Delta_2 \\
        -5 + \Delta_3 + 2\Delta_4 \\
    \end{bmatrix}
    \right)
    =
    \begin{bmatrix}
        5 + \Delta_1 + 2\Delta_2 \\
        0 \\
    \end{bmatrix}
    + o(\norm{\vec{\Delta}}).
$

In fact, for small enough $\norm{\vec{\Delta}}$ there is \emph{no} error at all in
the approximation. Namely, when $\Delta_1 + 2\Delta_2 > -5$ and $\Delta_3 +
2\Delta_4 < 5$ we have that the first row inside the ReLU is positive and the
second row is negative, so the entire equation becomes
\[
    {
    \begin{bmatrix}
        5 + \Delta_1 + 2\Delta_2 \\
        0 \\
    \end{bmatrix}
    =
    \begin{bmatrix}
        5 + \Delta_1 + 2\Delta_2 \\
        0 \\
    \end{bmatrix}
    + o(\norm{\vec{\Delta}})}.
\]
for which the error is $0 = o(\norm{\vec{\Delta}})$ as desired. Therefore, we say
that $f$ is differentiable with respect to $W$ at $\vec{v_0}, W_0$ with
Jacobian
$
    D_W f(\vec{v_0}, \vec{W_0})
    =
    \begin{bmatrix}
        1 & 2 & 0 & 0 \\
        0 & 0 & 0 & 0 \\
    \end{bmatrix}.
$
Note that in this example the error term was zero for small enough
$\norm{\vec{\Delta}}$. This happens more generally only when the activation functions
used are piecewise-linear. However, other activation functions are still
differentiable and we can still compute Jacobians.

\section{Computing Jacobians on Vertices of Linear Regions}
\label{app:VertexJacobians}
In~\pref{alg:polytopewise} we have ignored a major subtlety in the reduction.
Namely, the vertex points which we will call~\pref{alg:pointwise} on are points
lying on the boundary between linear regions, and hence are precisely those
(almost-nowhere) points where the network is technically non-differentiable.

To make the algorithm correct, one must be particularly careful when performing
this Jacobian computation. In particular, we should associate with each vertex
$v$ in~\pref{alg:polytopewise} the associated linear region $R$ which it is
representing. Then, when we repair in~\pref{alg:pointwise}, we should compute
the Jacobian using the activation pattern shared by points interior to linear
region $R$. In other words, we repair the vertices of a linear region $R$ by
assuming that that linear region extends to its boundaries.
Geometrically, we can imagine the surface of a PWL DNN as a large number of
polytopes tiled together, each having different slopes (normal vectors). When
repairing the vertex of a particular linear region, we want to use the normal
vector corresponding to that linear region, not one of the other adjacent ones.
In particular, this means that the same point may appear multiple times in the
final point repair specification, each time it appears it is instructing to
repair it as if it belongs to a different linear region.

\section{Non-Differentiable Functions}
\label{app:NonDfbl}
Interestingly, DDNNs can be extended to non-differentiable functions.
\pref{thm:CoupledDDNN}~and~\pref{thm:LinearDDNN} only require that the
linearization agrees with the actual function at its center, not any other
property of \pref{def:Linearize}. Consequently, non-differentiable functions
can be ``linearized'' by taking an arbitrarily-sloped line centered at that
point.  In fact, this is what we do in the zero-probability cases where the
input to the ReLU is 0, we arbitrarily (although consistently) pick the
linearization to be the zero line. For polytope repair more care needs to be
taken when deciding how to handle such points, see~\pref{app:VertexJacobians}
for details.

As we will see, point repair does not rely on~\pref{thm:DDNNSameLinRegions},
hence~\pref{alg:pointwise} works even for non-differentiable activation
functions.  However, polytope repair relies on~\pref{thm:DDNNSameLinRegions}
hence on the fact that it is an actual linearization. However, this is somewhat
of a curiosity, as all common activation functions are differentiable
almost-everywhere in order to apply gradient descent.
\section{Evaluation Tables (Extended)}
\pref{tab:ImageNet2} gives extended results for the ImageNet \task{1}
experiments.

\begin{table*}[t]
    \footnotesize
    \caption{Summary of experimental results for \task{1}. Efficacy column
    gives the number of layers (out of 10) for which a satisfying repair could
    be found. $^{\ \ast}$Gurobi timed out on one of the layers; for the other
    two layers, Gurobi was able to prove that no repair exists.  These three
    cases are not included in the timing numbers for the last row.}
    \begin{tabular}{@{}lc@{\hskip .5cm}ccc@{\hskip .5cm}rrr@{\hskip .5cm}cr@{\hskip .5cm}cr}
        Points & Efficacy
        & \multicolumn{3}{c}{Drawdown (\%)}
        & \multicolumn{3}{c}{Time}
        & \multicolumn{2}{c}{FT[1]}
        & \multicolumn{2}{c}{FT[2]}
        \\
        \cmidrule(r){3-5} \cmidrule(r){6-8} \cmidrule(r){9-10} \cmidrule(r){11-12}
        &
        & Best & Worst & Fastest
        & Fastest & Slowest & Best Drawdown
        & Drawdown & Time
        & Drawdown & Time
        \\
        \midrule
        100 & 10 / 10
        & 3.6 & 39.0 & 39.0
        & 46.3s & 3m40.4s & 1m39.0s
        & 10.2 & 4m31.8s
        & 8.2 & 9m24.0s
        \\
        200 & 10 / 10
        & 1.1 & 40.8 & 31.4
        & 1m26.8s & 7m20.8s & 2m50.8s
        & 9.6 & 12m19.5s
        & 9.6 & 26m35.0s
        \\
        400 & 10 / 10
        & 5.1 & 51.4 & 45.0
        & 3m2.8s & 17m55.6s & 4m45.3s
        & 13.8 & 34m2.6s
        & 11.1 & 1h9m26.8s
        \\
        800 & 7$^\ast$ / 10
        & 5.3 & 58.8 & 58.8
        & 6m7.9s & 58m9.7s & 8m28.1s
        & 15.4 & 1h22m18.7s
        & 13.4 & 2h33m8.2s
    \end{tabular}
    \label{tab:ImageNet2}
\end{table*}

}{}

\end{document}